\theoremstyle{plain}
\newtheorem{theorem}{Theorem}[section]
\newtheorem{proposition}[theorem]{Proposition}
\newtheorem{lemma}[theorem]{Lemma}
\newtheorem{corollary}[theorem]{Corollary}
\theoremstyle{definition}
\newtheorem{definition}[theorem]{Definition}
\theoremstyle{remark}
\title{Data Depth as a Risk}
\author{%
  Arturo Castellanos \\
  LTCI, \\
  Telecom Paris, Institut Polytechnique de Paris, France\\
  91120 Palaiseau \\
  \texttt{arturo.castellanos@telecom-paris.fr} \\
   \And
  Pavlo Mozharovskyi \\
  LTCI, \\
  Telecom Paris, Institut Polytechnique de Paris\\
  91120 Palaiseau \\
  \texttt{pavlo.mozharovskyi@telecom-paris.fr} \\
}
\begin{document}

\maketitle

\begin{abstract}
Data depths are score functions that quantify in an unsupervised fashion how central is a point inside a distribution, with numerous applications such as anomaly detection, multivariate or functional data analysis, arising across various fields. The halfspace depth was the first depth to aim at generalising the notion of quantile beyond the univariate case.
Among the existing variety of depth definitions, it remains one of the most used notions of data depth. Taking a different angle from the quantile point of view, we show that the halfspace depth can also be regarded as the minimum loss of a set of classifiers for a specific labelling of the points. By changing the loss or the set of classifiers considered, this new angle naturally leads to a family of "loss depths", extending to well-studied classifiers such as, e.g., SVM or logistic regression, among others. This framework directly inherits computational efficiency of existing machine learning algorithms as well as their fast statistical convergence rates, and opens the data depth realm to the high-dimensional setting. Furthermore, the new loss depths highlight a connection between the dataset and the right amount of complexity or simplicity of the classifiers. The simplicity of classifiers as well as the interpretation as a risk makes our new kind of data depth easy to explain, yet efficient for anomaly detection, as is shown by experiments.
\end{abstract}

\section{Introduction}

Data depths are statistical functions that measure the degree of centrality of a point inside a distribution by extending notions of quantile to the multivariate setting. They have many applications such as missing data imputation~\cite{Mozharovskyi20imput}, statistical tests~\citep{Liu93}, anomaly detection~\citep{mozharovskyi2022anomaly}, across various fields such as NLP~\citep{Colombo2022}, adversarial attacks~\citep{picot2023a} and uncertainty quantification with neural networks~\citep{gamboa24}. Contrary to one-class novelty detection algorithms such as One-Class SVM~\citep{ocsvm}, Deep SVDD~\citep{ruff18a}, One-Class GAN~\citep{ocgan} and many others~\citep{Wu21}, data depths do not learn a generic function on some training data to apply universally on test data, but aim at being robustness by defining an optimisation program specific to each data point, and therefore withstand contamination in the training data. Among various data depths, the halfspace depth~\citep{tukey1975mathematics} has been one of the most studied: it looks at quantiles after projecting the multivariate data along some one-dimensional directions, and computes some infimum over all possible directions in $\bbR^d$. Some price to pay for this robustness is a computational time that typically does not scale well with the dimension. Another drawback is that depths such as halfspace depth often assumes the distribution to have one mode and of convex shape, although those two drawbacks have been tackled by some extension of halfspace depth~\citep{castellanos2023fast} using gradient descent for optimisation and kernel methods for non-convex multimodality.

On the other hand, in the last decade, machine learning techniques have shown impressive results, in particular in the supervised setting, with fast optimisation algorithms working in huge dimensions.
An inspiring connection between Support Vector Machine (SVM) classification and the Maximum Mean Discrepancy (MMD~\citep{gretton12}) distance between positively and negatively labelled data have already been shown in~\citet{reid11}. Following this intuition, we build a general framework we call \emph{loss depths}, by embedding the data point of interest as a Dirac measure on it (negatively labelled), and try to separate it from the distribution (positively labelled), connecting therefore to supervised classification. By evaluating the loss of the algorithm, we quantify how easy or not it is separate the point from the data, and so define its degree of centrality. In section~\ref{sec:HD}, we recover the halfspace depth with this framework, which we generalise in section~\ref{sec:extension}.
To avoid the collapsing of the depth to zero, the classifiers should be \textbf{\emph{really} ``artificially'' intelligent} and not be strong enough to separate the ``median'' points. This gives us incentive to use simple models such as SVM or logistic regression whose associated loss depths we analyse further in section~\ref{sec:linear}. Simplicity of the model let us analyse their statistical convergence rates which are dimension-independent. 
Simplicity also gives an easy interpretation of the decision as shown in experiments in section~\ref{sec:XP}, which suggest that shallow architectures can be a way to get the full picture of some datasets.

\paragraph{Notations}
We will consider loss functions for classification of the shape $l: \bbR  \times \{-1,1\} \to \bbR+$. For such loss $l$, we will define $l_+ : \bbR, \hat{y} \mapsto l(\hat{y},+1)$ the loss on a positive label and $l_- : \bbR, \hat{y} \mapsto l(\hat{y},-1)$ its negative counterpart. Denote $(\cX, c)$ a metric space, $\cP(\cX)$ the space of probability distributions over $\cX$, and $W_c$ the 1-Wasserstein distance with cost $c$ 
between such distributions~\citep{villani2009optimal}:
\begin{equation*}
    W_c(\mu,\nu) = \inf_{\pi \in \Pi(\mu,\nu)} \iint c(x,y) \mathrm d\pi(x,y)
\end{equation*}
where $\Pi(\mu,\nu)$ denotes all the possible couplings between $\mu$ and $\nu$.
We call \emph{Gaussian kernel} (over $\bbR^d \times \bbR^d$) the kernel function: $k(x,y)=e^{-\gamma||x-y||^2}$, $\gamma>0$, see also background in Appendix~\ref{appendsec:RKHS}.

\section{Halfspace Depth: another interpretation}\label{sec:HD}
A usual interpretation of halfspace depth is a generalisation of a survival function to the multivariate settings. For a given $z$ in dataspace $\cX$ and a given distribution $Q$ over $\cX$: 
 \begin{align}
        \TukeyD(z|Q) = \inf_{u \in \sphere} \mathbb{P}(\langle u, X\rangle \geq \langle u, z\rangle) \text{ for } X\sim Q \label{generalTukey}
    \end{align}
where $\sphere$ is the unit sphere of $\mathbb{R}^d$.
To sketch the idea when the infimum is a minimiser: the halfspace depth computes the survival function after projecting in one dimension along some direction, and this direction is preferably picked among all possible as the one that would give minimum value.
Here, we suggest a new interpretation of the halfspace depth more related to the supervised machine learning mindset: as a risk. 
Indeed, we can rewrite it as the best $0\text{-}1$ loss attainable among linear classifiers on 
some particular distribution $\LaDi$:

\begin{definition} For a probability distribution $Q$ over some domain $\cX$, we consider the following (artificially) \emph{\textbf{labelled distribution}} over $\cX \times \{-1,1\}$, where $\cX$ is the data space (usually $\bbR^d$) and $\{-1,1\}$ the label space:
    $\LaDi = {\color{red} \frac{1}{2}Q \otimes \delta_1} +{\color{blue}\frac{1}{2} \delta_z \otimes \delta_{-1}}$.
\end{definition}
In other words if $(x,y)\sim \LaDi$, half of the time $x$ is distributed according to $Q$, with a {\color{red}positive} label $y=1$, and half of the time it is drawn as $z$, with a {\color{blue}negative} label $y=-1$.

\begin{proposition}
 \begin{align}
        \TukeyD(z|Q) = \inf_{f \in Lin^*} \bbE_{(x,y)\sim \LaDi} 2 L_{0\text{-}1}(f(x),y) \label{lossTukey}
    \end{align}    
    where 
    $Lin^* = \{ f:\bbR^d \to \bbR, x \to \langle w,x \rangle + b | w\in \bbR^d\setminus \{0\},  b\in \bbR\}$ is the space of (non-null) linear classifiers, and $L_{0\text{-}1}$ is the zero-one loss defined by $L_{0\text{-}1}(f(x),y) = \mathds 1_{[yf(x)<0]}$.
\end{proposition}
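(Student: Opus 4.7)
The plan is to expand the expectation using the structure of $\LaDi$, reduce it to a one-parameter optimisation over the offset of a linear classifier whose decision boundary passes through $z$, and then match the result to \eqref{generalTukey} via antipodal symmetry of the unit sphere. By linearity of expectation, for any measurable $f$,
\begin{equation*}
2\, \bbE_{(x,y)\sim \LaDi}[L_{0\text{-}1}(f(x), y)] = \mathbb{P}_{X\sim Q}(f(X) < 0) + \mathds{1}[f(z) > 0].
\end{equation*}
Substituting $f(x) = \langle w, x\rangle + b$ with $w \neq 0$, the objective splits into two regimes depending on the sign of $\langle w, z\rangle + b$.

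When $\langle w, z\rangle + b > 0$ the indicator contributes $1$, so the objective is at least $1$; since $\TukeyD \leq 1$, this regime is never strictly better than the complement. On the region $\langle w, z\rangle + b \leq 0$ the indicator vanishes, leaving $\mathbb{P}(\langle w, X\rangle < -b)$, a non-decreasing function of $-b$. The constraint $-b \geq \langle w, z\rangle$ is thus saturated at $b = -\langle w, z\rangle$, meaning the optimal linear classifier has its decision hyperplane passing through $z$. At this optimum the objective reduces to $\mathbb{P}(\langle w, X - z\rangle < 0)$, which is invariant under positive scaling of $w$, so the infimum over $\bbR^d \setminus \{0\}$ equals the infimum over the unit sphere $\sphere$. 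The antipodal substitution $u \mapsto -u$ then yields $\inf_{u \in \sphere} \mathbb{P}(\langle u, X - z\rangle < 0) = \inf_{u \in \sphere} \mathbb{P}(\langle u, X - z\rangle > 0)$, which is to be identified with $\inf_u \mathbb{P}(\langle u, X - z\rangle \geq 0) = \TukeyD(z|Q)$.

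The hardest step is precisely this last identification: passing between $\inf_u \mathbb{P}(\langle u, X-z\rangle > 0)$ and $\inf_u \mathbb{P}(\langle u, X - z\rangle \geq 0)$ requires that an optimising $u$ assign zero $Q$-mass to the boundary hyperplane $\{\langle u, X\rangle = \langle u, z\rangle\}$. This is automatic when $Q$ is absolutely continuous (or, more generally, has no atom on the relevant hyperplane through $z$), and must be argued with care otherwise; everything else in the derivation is essentially bookkeeping around the definition of $\LaDi$ and the monotonicity of the one-dimensional CDF.
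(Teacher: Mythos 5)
Your proof is correct and follows essentially the same route as the paper's: discard classifiers that misclassify $z$, push the offset by monotonicity of the one-dimensional distribution function until the hyperplane passes through $z$, normalise $w$ to $\sphere$, and identify the result with \eqref{generalTukey} via the antipodal map. The caveat you flag at the end is not a weakness of your write-up but a genuine gap that the paper's own proof passes over silently: the loss side yields $\inf_{u\in\sphere}\mathbb{P}(\langle u, X-z\rangle>0)$ with a \emph{strict} inequality (a point lying exactly on the decision boundary incurs zero $0\text{-}1$ loss under either label), whereas $\TukeyD$ uses $\geq$, and the two can differ when $Q$ charges a hyperplane through $z$ --- e.g.\ for $Q=\delta_z$ the right-hand side of \eqref{lossTukey} is $0$ while $\TukeyD(z|Q)=1$. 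So your explicit hypothesis that $Q$ assigns no mass to hyperplanes through $z$ (in particular, absolute continuity suffices) is actually needed for the proposition as stated, and making it explicit improves on the paper's argument.
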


\begin{proof}
    Since the point $z$ represents at least a mass of one half of the distribution, misclassifying it implies that the loss $L_{0\text{-}1}$ is at least $1/2$ while classifying it correctly ensures the loss is no more than $1/2$, so we can assume we restrict ourselves to classifiers that classifies $z$ correctly (see Figure~\ref{subfig:HD1}). For any classifier $w,b$ and the associated halfspace $\bbH^-_{w,b}=\{ x |\langle w,x \rangle + b \leq 0\}$ such that $z\in\bbH^-_{w,b}$, notice that the amount of positively-labelled mass on the negatively-labelled halfspace is potentially reduced when $b$ is shifted to $-\langle w,z \rangle$ such that $z$ is exactly on the boundary hyperplane: if $z\in\bbH^-_{w,b}$, $Q(\bbH^-_{w,-\langle w,z \rangle}) \leq Q(\bbH^-_{w,b})$ as $\bbH^-_{w,-\langle w,z \rangle} \subseteq\bbH^-_{w,b}$ (see Figure~\ref{subfig:HD2}). Therefore we can restrict further our research to separating hyperplanes containing $z$ (see Figure~\ref{subfig:HD3}). Dividing $w,b$ by the norm of $w$ will not change the zero-one loss, so we can restrict ourselves to $w\in\sphere$. Then for $u=w$ orthogonal direction defining such hyperplanes the expressions (\ref{generalTukey}) and (\ref{lossTukey}) will be equal. \qed
\end{proof}

    \begin{figure}[t!]
        \subfloat[Misclassifying $z$]{%
            \includegraphics[width=.32\linewidth,trim={30cm 1cm 15cm 20cm},clip]{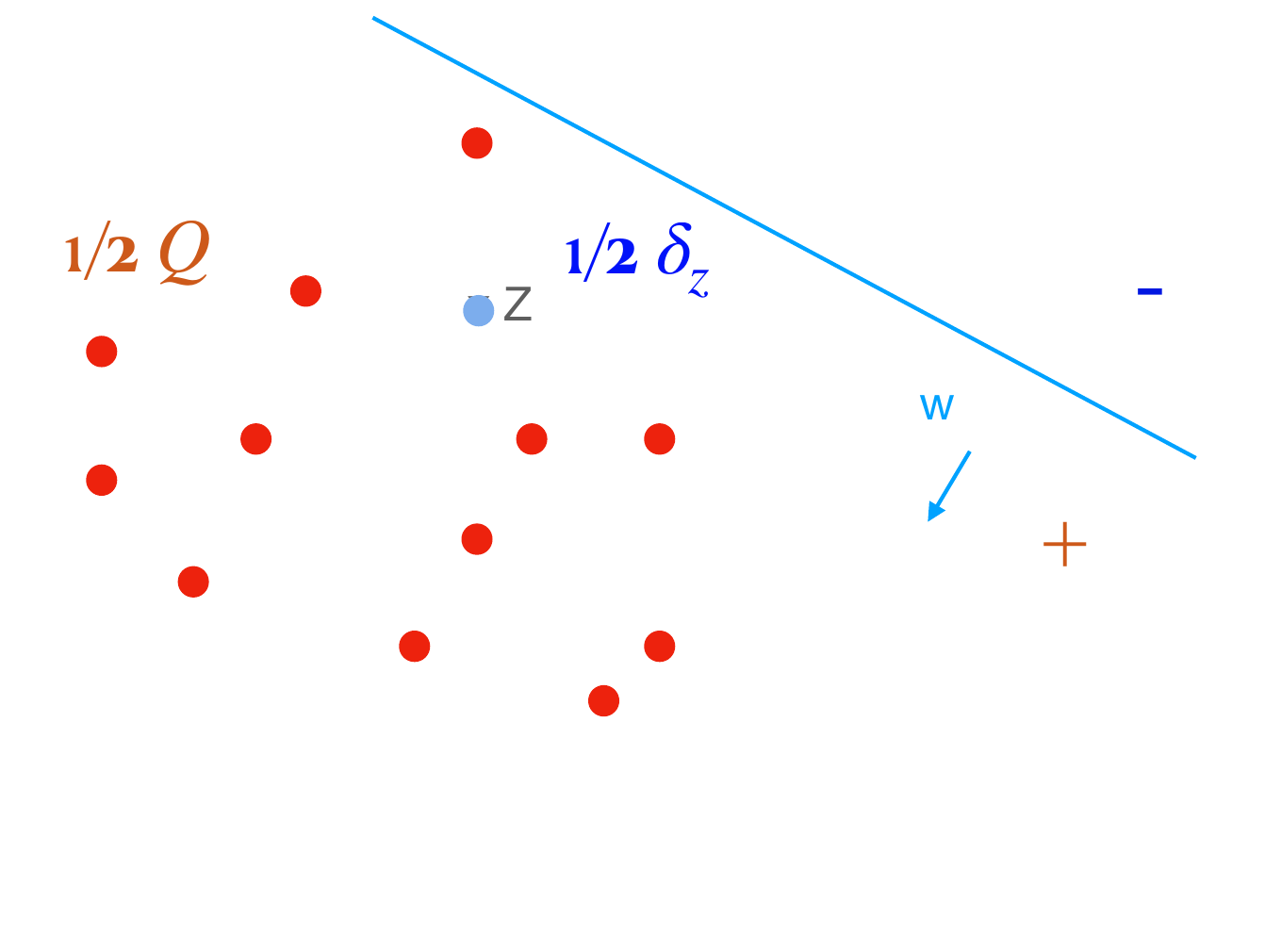}%
            \label{subfig:HD1}%
        }
        \subfloat[Misclassifying many positives]{
            \includegraphics[width=.32\linewidth,trim={30cm 1cm 15cm 20cm},clip]{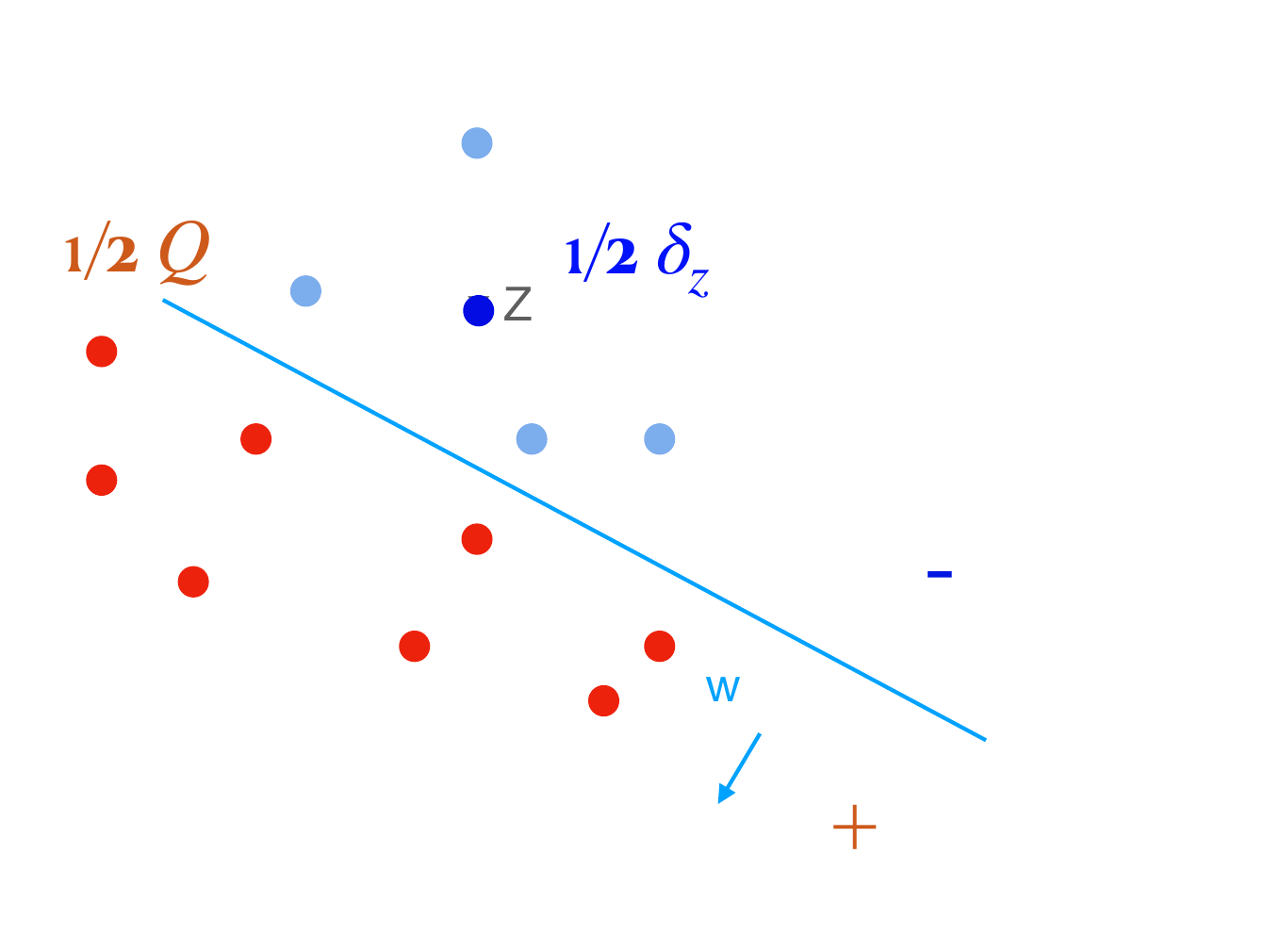}%
            \label{subfig:HD2}%
        }
        \subfloat[Recovering halfspace depth]{%
            \includegraphics[width=.32\linewidth,trim={30cm 1cm 15cm 20cm},clip]{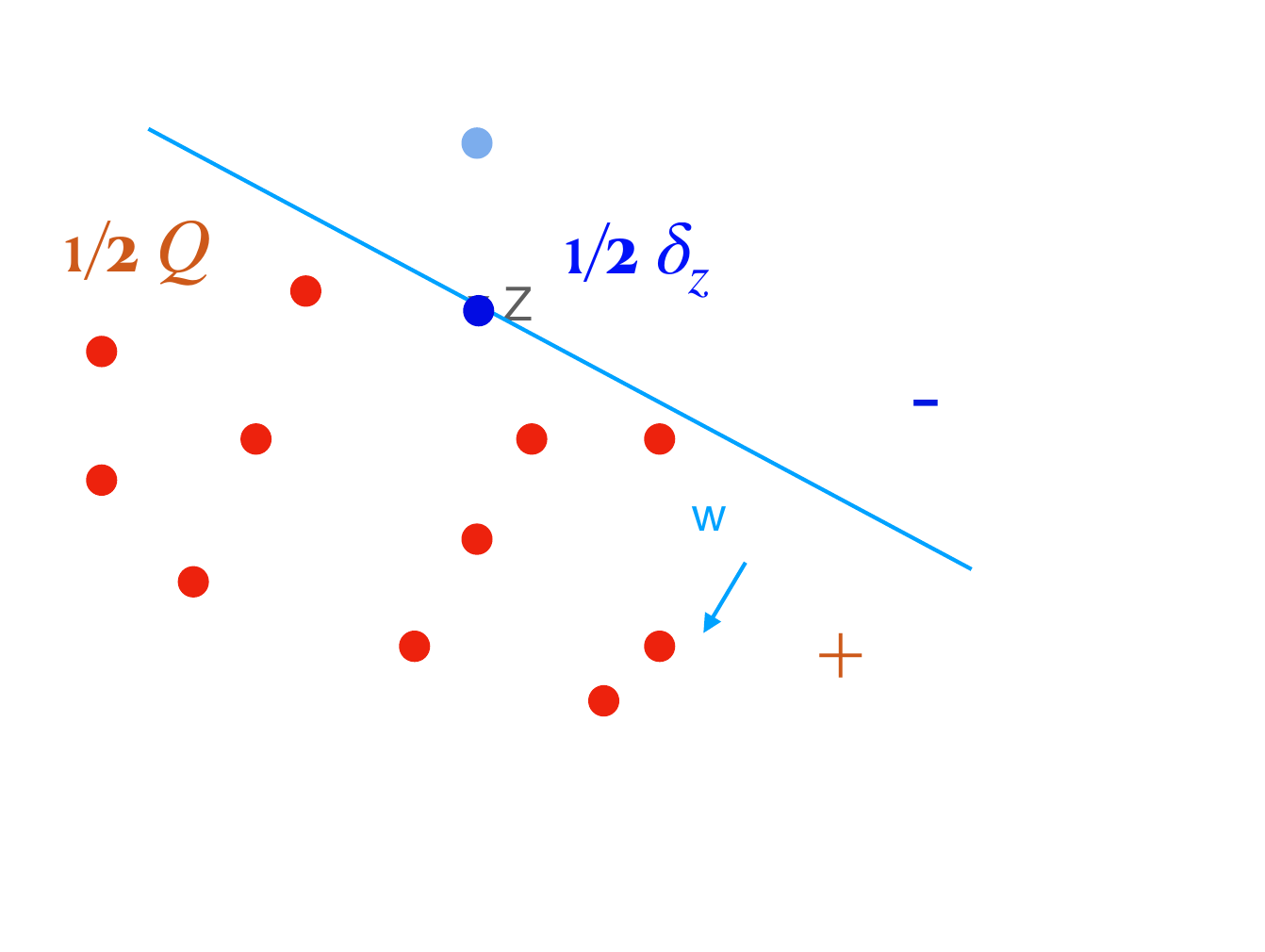}%
            \label{subfig:HD3}%
        }
        \caption{Equivalence between halfspace depth and the zero-one loss on $\LaDi$.\\
        \centering Positive points are in red, negative in blue, and if misclassified, in baby blue.}
        \label{fig:HDtoLD}
    \end{figure}

\section{Extension}\label{sec:extension}
\subsection{Depth with loss}
\begin{definition}\label{def:lossDepth}
    For some (non-empty) space of functions $\cH$ and some loss function $l$, we define the loss depth $\lossD$ on point $z$ with respect to distribution $Q$ as:
    \begin{equation*}
        \lossD(z|Q) = \inf_{f \in \cH}\bbE_{(x,y)\sim \LaDi} l(f(x),y)
    \end{equation*}
\end{definition}

Another way to express the loss depth is:
\begin{equation}
    \lossD(z|Q) = \inf_{f \in \cH} \frac{1}{2}\bbE_{x\sim Q} l_+(f(x)) +\frac{1}{2} l_-(f(z)) \label{l+l-}
\end{equation}

If $l$ and functions in $\cH$ have Lipschitz properties, this can translate to the loss depth (proof is in Appendix~\ref{appendsec:Lip}):

\begin{proposition} \label{prop:Lip}
    If $l$ is $L$-Lipschitz with respect to its first argument, all $f\in\cH$ are $W$-Lipschitz (w.r.t. metric $c$), then the depth $D_{l,\cH}$ is $\frac{1}{2}LW$-Lipschitz with respect to $z$ using metric $c$, and with respect to $Q$ using the Wasserstein distance $W_c$.
\end{proposition}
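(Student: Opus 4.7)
The plan is to use the elementary fact that the infimum of a family of $K$-Lipschitz functions is itself $K$-Lipschitz, and apply this to the expression (\ref{l+l-}). Write $\lossD(z|Q) = \inf_{f\in\cH} g_f(z,Q)$ where
\begin{equation*}
    g_f(z,Q) = \tfrac12 \bbE_{x\sim Q} l_+(f(x)) + \tfrac12 l_-(f(z)).
\end{equation*}
The strategy is to prove that for every fixed $f\in\cH$, the map $g_f$ is $\tfrac12 LW$-Lipschitz separately in $z$ (with metric $c$) and in $Q$ (with metric $W_c$), then transfer these Lipschitz estimates to the infimum.

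For the dependence in $z$, I would fix $f\in\cH$ and observe that the first term does not depend on $z$, so it suffices to bound $|l_-(f(z_1))-l_-(f(z_2))|$. The assumption that $l$ is $L$-Lipschitz in its first argument applies to $l_-$, and composing with the $W$-Lipschitz $f$ yields $|l_-(f(z_1))-l_-(f(z_2))| \leq LW\, c(z_1,z_2)$, so $g_f(\cdot,Q)$ is $\tfrac12 LW$-Lipschitz in $z$.

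For the dependence in $Q$, I would fix $f$ and $z$ and note that $h:=l_+\circ f$ is $LW$-Lipschitz on $(\cX,c)$ by the same composition. For any coupling $\pi\in\Pi(Q_1,Q_2)$,
\begin{equation*}
    \bigl|\bbE_{x\sim Q_1} h(x) - \bbE_{x\sim Q_2} h(x)\bigr|
    \leq \iint |h(x)-h(y)|\,d\pi(x,y)
    \leq LW \iint c(x,y)\,d\pi(x,y).
\end{equation*}
Taking the infimum over $\pi$ yields $LW\cdot W_c(Q_1,Q_2)$ (this is the standard Kantorovich--Rubinstein bound, in its easy direction), and the $\tfrac12$ factor from (\ref{l+l-}) gives the claim that $g_f(z,\cdot)$ is $\tfrac12 LW$-Lipschitz w.r.t.\ $W_c$.

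The last step is the transfer to the infimum: for any two points $a_1,a_2$ in the relevant metric space (whether $(\cX,c)$ or $(\cP(\cX),W_c)$) and any $f\in\cH$, one has $g_f(a_1) \leq g_f(a_2) + \tfrac12 LW\, d(a_1,a_2)$; infimising over $f$ on both sides preserves the inequality, and swapping $a_1\leftrightarrow a_2$ gives the absolute-value bound. I expect no serious obstacle: the only nontrivial ingredient is the Kantorovich--Rubinstein estimate for a Lipschitz integrand, which follows directly from the definition of $W_c$ and requires no duality theorem; everything else is composition of Lipschitz maps and the infimum lemma.
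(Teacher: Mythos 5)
Your proposal is correct and follows essentially the same route as the paper's own proof: fix $f$, bound the change of the $l_-$ term by $\tfrac12 LW\,c(z,z')$ for the $z$-dependence, bound the change of the $\bbE_Q l_+(f(\cdot))$ term via an arbitrary coupling and then infimise over couplings for the $Q$-dependence, and finally transfer the uniform-in-$f$ Lipschitz estimate to the infimum. The only (harmless) difference is that you name the ``infimum of uniformly Lipschitz functions is Lipschitz'' step explicitly, whereas the paper states it implicitly.
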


\subsection{Regularisation}

To avoid overfitting that would collapse the depth to zero for all $z\in\cX$, we want our space of classifiers $\cH$ to be just enough rich to separate some points but no more.
Therefore it is useful to apply constraints on $\cH$, however in practice it is much easier to include those constraints as a regularisation in the objective of the optimisation problem and it may lead to faster algorithms and unicity of the minimiser, therefore we also consider the regularised version of our loss depth:

\begin{definition}\label{def:regDepth}
    Denote $F$ the regularised loss $F(f,x,y) = l(f(x),y) + reg(f)$ for some regularisation function $reg: \cH \to \bbR_{\geq 0}$, such that $\bbE_{(x,y)\sim \LaDi} F(f,x,y)$ admits a unique minimum $f^*$. Then we define the regularised loss depth:
    \begin{equation*}
        \regD_{l,\cH}(z|Q) = \bbE_{(x,y)\sim \LaDi} l(f^*(x),y) .
    \end{equation*}
\end{definition}

For the rest of the paper, we will use $\regD$ instead of $D$ whenever the depth is regularised.
The penalisation function may typically include a scaling term $\lambda$ that can be tuned.


A third variant would be to use the regularised loss $F$ in Def.~\ref{def:lossDepth}; for the sake of interpretability as a pure loss, we will focus on the first two versions here.

\subsection{Illustration}\label{subsec:illu}
The halfspace depth due to its linear and convex nature has trouble to handle multimodal data. To illustrate our methods, in Fig.~\ref{fig:4ml}, we apply it to a dataset with two modes, each of 200 samples distributed as normal distribution $\cN([-3.5,-3.5],I_d)$ and $\cN([3.5,3.5],I_d)$ respectively. 

We use four classical machine learning methods from \texttt{sklearn}~\citep{scikit-learn}: 
\begin{itemize}
    \item Multi-layer Perceptron, with one hidden layer of 100 neurons with ReLu activation and log-loss and L2-regularisation with $\lambda=0.01$,
    \item Random Forest, with 1000 estimators of maximum depth of 2 and (without bootstrap),
    \item Support Vector Machine, with Gaussian kernel with $\gamma$  chosen as the inverse of the median of the squared Euclidean distance of the data and with L2-regularisation with $\lambda=1$,
    \item Gaussian process classifier, with Gaussian kernel where $\gamma$ is chosen as one half of the inverse of the squared first quartile of the Euclidean distance of the data.
\end{itemize}

    \begin{figure}[t!]
        \subfloat[Multi-Layer Perceptron Depth ($MLP\regD$)]{%
            \includegraphics[width=.48\linewidth,trim={0 0 0 0.5cm},clip]{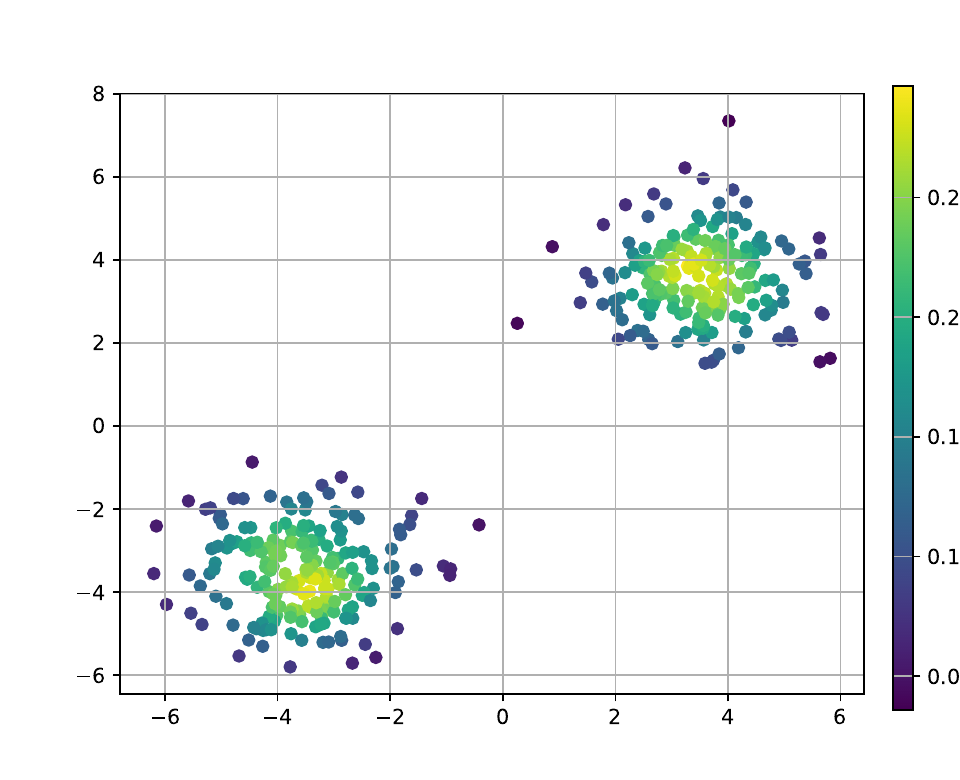}%
            \label{subfig:MLPD}%
        }\hfill
        \subfloat[Support Vector Machine Depth ($SVM\regD$)]{%
            \includegraphics[width=.48\linewidth,trim={0 0 0 0.5cm},clip]{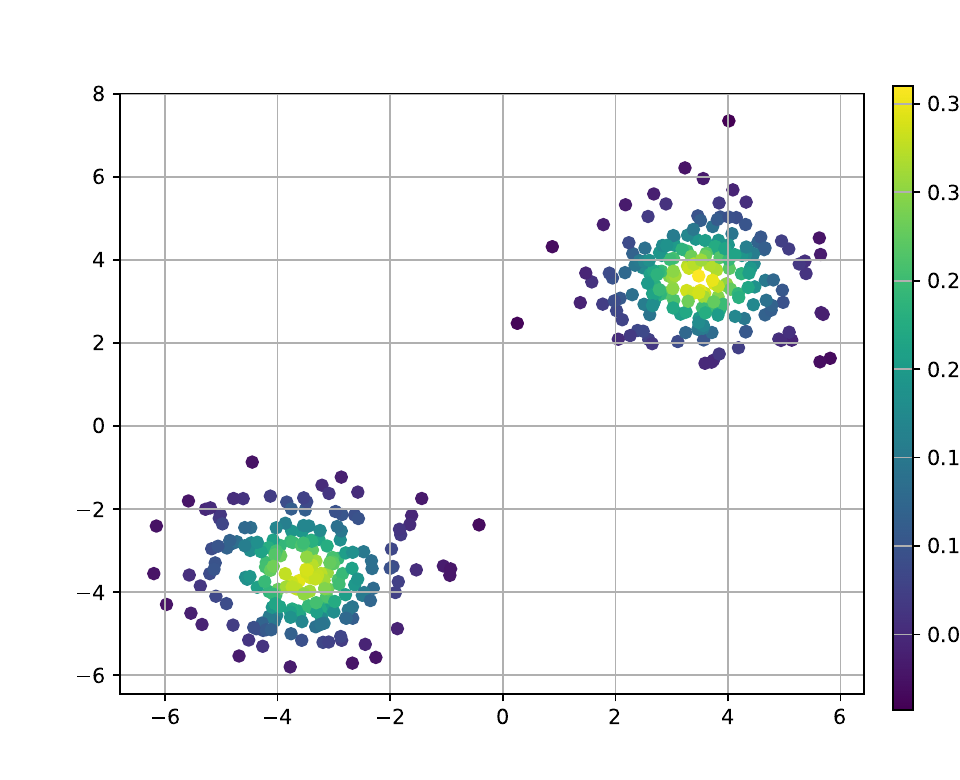}%
            \label{subfig:SVMD}%
        }\\
        \subfloat[Random Forest Depth ($RFD$)]{%
            \includegraphics[width=.48\linewidth,trim={0 0 0 0.5cm},clip]{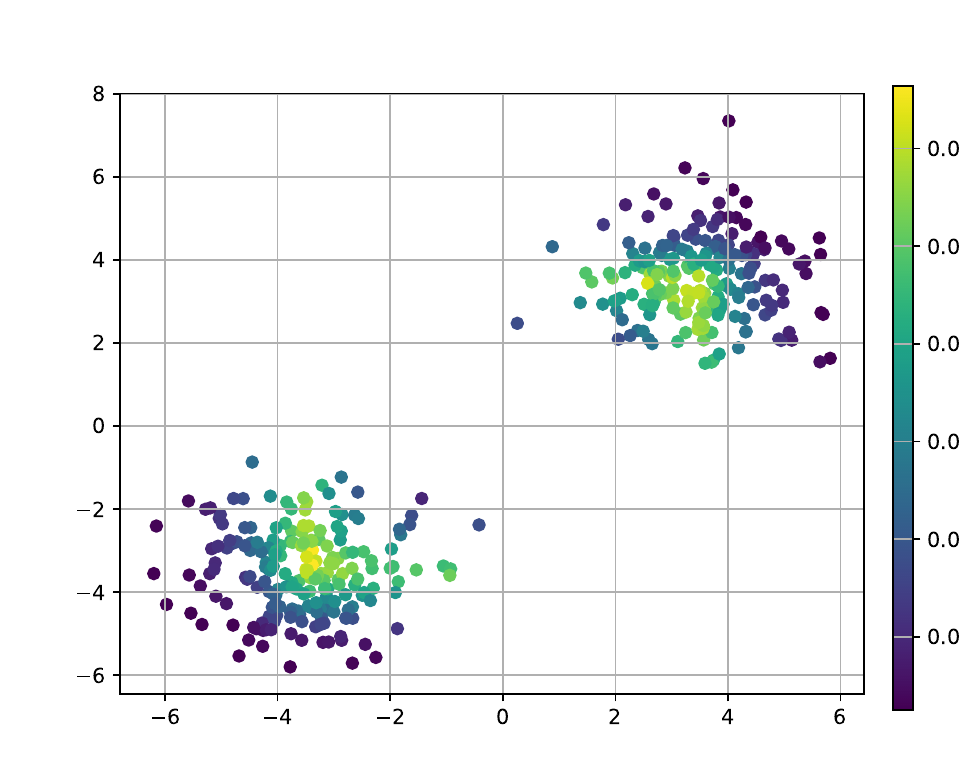}%
            \label{subfig:RFD}%
        }\hfill
        \subfloat[Gaussian Process Classifier Depth ($GPD$)]{%
            \includegraphics[width=.48\linewidth,trim={0 0 0 0.5cm},clip]{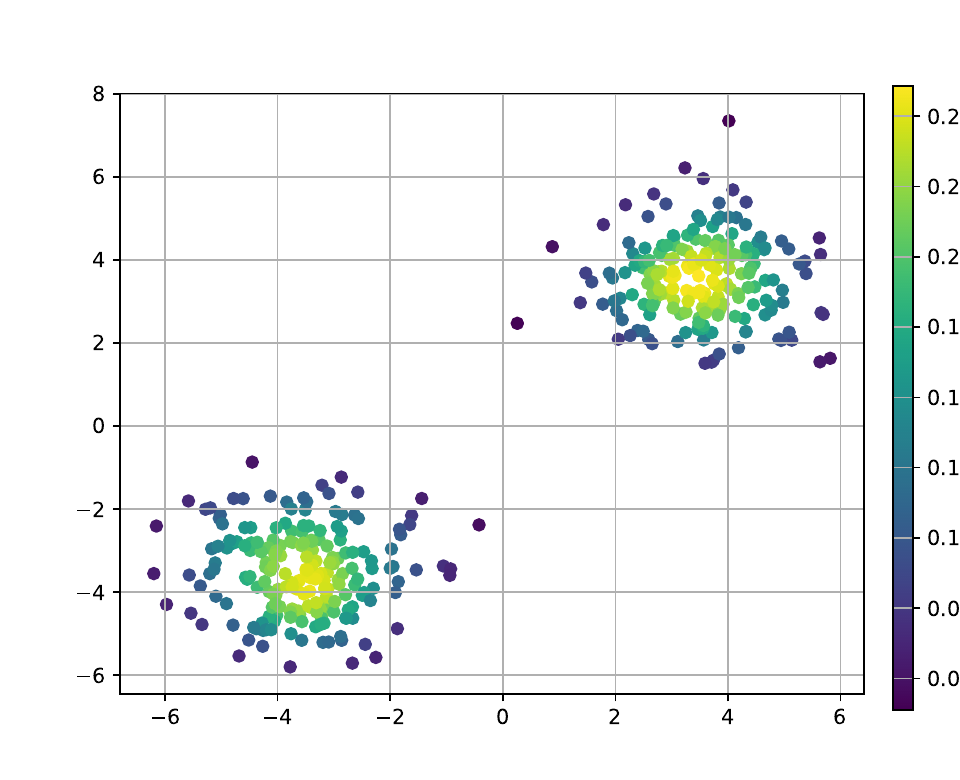}%
            \label{subfig:GPD}%
        }
        \caption{Loss depths on a bigaussian dataset with $n=400$ samples}
        \label{fig:4ml}
    \end{figure}

These algorithms recover the central part of each mode as very deep (light color) and the depth fades away the further one goes away from those centers (dark color). The random forest has more trouble with the spherical symmetry due to its dependence on the basis of the coordinates. Random rotation of the coordinates such as in~\citet{JMLR:v17:blaser16a} could help. In addition, for a similar distribution with total samples $n=200$ this time, as the dimension grows as $d=2,4,6,8$, Fig.~\ref{fig:rank} in Appendix~\ref{appendsec:ranking} displays the boxplot of the rank correlations between the ordering obtained using the true densities and the one using the depths, over 10 runs. $SVM\regD$ and $GPD$ seem to be the best performers.

\section{Statistical convergence rates}\label{sec:rates}
Here we consider some empirical distribution $Q_n$ based on $n$ samples drawn from $Q$, and want to investigate the speed of convergence of $|D(z|Q_n)-D(z|Q)|$ as $n$ grows. One of the advantage of halfspace depth is to have convergence rate 
of order $O(n^{-1/2})$, which does not depends on the dimension.
In this section, 
we develop tools to recover rates  
based on a well-known quantity: the \emph{generalisation error}, as shown in the following lemma.

\begin{lemma}\label{lemma:gen}
    The convergence rate of loss depth $\lossD$ is bounded by the generalisation error of $l_+$:
    \begin{equation*}
    | \lossD(z|Q_n)-\lossD(z|Q)| \leq \sup_{f\in\cH} |\mathbb{E}_{x\sim Q_n}l_+(f(x)-\mathbb{E}_{x\sim Q}l_+(f(x))|
    \end{equation*}
\end{lemma}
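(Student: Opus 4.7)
The plan is to exploit the identity~(\ref{l+l-}) to write both depths as infima over $\cH$ of functionals that differ only in their $l_+$ term, and then apply the standard inequality controlling the difference of two infima by the supremum of the pointwise difference.

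Concretely, I would first set, for every $f \in \cH$,
\begin{equation*}
A_n(f) = \tfrac{1}{2}\mathbb{E}_{x\sim Q_n} l_+(f(x)) + \tfrac{1}{2} l_-(f(z)),
\qquad
A(f) = \tfrac{1}{2}\mathbb{E}_{x\sim Q} l_+(f(x)) + \tfrac{1}{2} l_-(f(z)),
\end{equation*}
so that by~(\ref{l+l-}), $\lossD(z|Q_n) = \inf_{f\in\cH} A_n(f)$ and $\lossD(z|Q) = \inf_{f\in\cH} A(f)$. The crucial observation is that the $l_-(f(z))$ contribution is identical in both functionals, so the pointwise difference reduces to
\begin{equation*}
A_n(f) - A(f) = \tfrac{1}{2}\bigl(\mathbb{E}_{x\sim Q_n} l_+(f(x)) - \mathbb{E}_{x\sim Q} l_+(f(x))\bigr).
\end{equation*}

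Next, I would invoke the elementary lemma that for any two real-valued functionals $A_n, A$ on a common (nonempty) set $\cH$, one has $|\inf_{\cH} A_n - \inf_{\cH} A| \leq \sup_{\cH} |A_n - A|$, proved by the two-sided argument: for any $\varepsilon > 0$, pick $f_\varepsilon$ nearly achieving $\inf A$, then $\inf A_n \leq A_n(f_\varepsilon) \leq A(f_\varepsilon) + \sup |A_n - A| \leq \inf A + \varepsilon + \sup |A_n - A|$, and symmetrically, so letting $\varepsilon \to 0$ yields the claim. Combining this with the previous display gives
\begin{equation*}
|\lossD(z|Q_n) - \lossD(z|Q)| \leq \tfrac{1}{2}\sup_{f\in\cH}\bigl|\mathbb{E}_{x\sim Q_n} l_+(f(x)) - \mathbb{E}_{x\sim Q} l_+(f(x))\bigr|,
\end{equation*}
which is even stronger than the stated bound (the constant $1/2$ can be dropped to recover the lemma).

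There is essentially no hard step: the whole argument is a one-line application of the $|\inf - \inf| \leq \sup |\cdot|$ principle once one notices that $l_-(f(z))$ is independent of the sampling and therefore drops out of the difference. The only subtlety worth being careful about is allowing the infima not to be attained, which is why I handle it through the $\varepsilon$-argument rather than assuming a minimiser exists. No assumption on $\cH$ beyond non-emptiness and no regularity assumption on $l$ is needed for this step; the actual quantitative rate will then follow in the paper by bounding the right-hand side via standard generalisation-error tools (Rademacher complexities, VC dimension, etc.) for the class $l_+ \circ \cH$.
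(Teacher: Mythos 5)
Your proposal is correct and follows essentially the same route as the paper: the paper likewise picks an $\varepsilon$-near-minimiser for each of the two infima, observes that the $l_-(f(z))$ term cancels in the difference, and lets $\varepsilon\to 0$, which is exactly the two-sided argument you package as the generic $|\inf_{\cH} A_n - \inf_{\cH} A|\leq \sup_{\cH}|A_n-A|$ lemma. Your version even correctly retains the factor $\tfrac{1}{2}$ coming from the weight of $Q$ in $\LaDi$, which the paper's displayed computation silently drops (harmlessly, since the stated bound is the weaker one).
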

 Since the bound only depends on $Q,Q_n$, it works uniformly on all $z$.
\begin{proof}
    We will note for convenience:
    \begin{equation*}
        \Gamma(f,z,Q) = \bbE_{(x,y)\sim \LaDi} l(f(x),y) \label{eq:gamma}.
    \end{equation*}
    Then 
    \begin{equation*}
    \lossD(z|Q_n)-\lossD(z|Q) = \inf_{f\in\cH} \Gamma(f,z,Q_n) - \inf_{f\in\cH} \Gamma(f,z,Q)
\end{equation*}

For any $\varepsilon>0$, there exists some $f^*_\varepsilon\in\cH$ such that $\Gamma(f^*_\varepsilon,z,Q) \leq \inf_{f\in\cH} \Gamma(f,z,Q) + \varepsilon$. Therefore:
    \begin{align*}
    \lossD(z|Q_n)-\lossD(z|Q) &= \inf_{f\in\cH} \Gamma(f,z,Q_n) - \Gamma(f^*_\varepsilon,z,Q_n)\\ &+\Gamma(f^*_\varepsilon,z,Q_n)-\Gamma(f^*_\varepsilon,z,Q)
    +\Gamma(f^*_\varepsilon,z,Q)-\inf_{f\in\cH} \Gamma(f,z,Q)  \\
    &\leq \Gamma(f^*_\varepsilon,z,Q_n)-\Gamma(f^*_\varepsilon,z,Q) + \varepsilon \\
    &=  \mathbb{E}_{x\sim Q_n}l_+(f^*_\varepsilon(x))-\mathbb{E}_{x\sim Q}l_+(f^*_\varepsilon(x)) + \varepsilon.
\end{align*}
By symmetrical reasoning, there exists some $\hat{f}^*_\varepsilon\in\cH$ such that $\Gamma(\hat{f}^*_\varepsilon,z,Q_n) \leq \inf_{f\in\cH} \Gamma(f,z,Q_n) + \varepsilon$, and:
\begin{equation*}
    \lossD(z|Q)-\lossD(z|Q_n) \leq \mathbb{E}_{x\sim Q}l_+(\hat{f}^*_\varepsilon(x)-\mathbb{E}_{x\sim Q_n}l_+(\hat{f}^*_\varepsilon(x)) + \varepsilon
\end{equation*}
Combining those two symmetric results, taking $\varepsilon$ as small as we want allows to conclude. \qed
\end{proof}
A similar idea (see Appendix~\ref{appendsec:statrates}) works for the regularised case:
\begin{lemma}\label{lem:reggen}
    For some regularised depth $\regD$, denote $f^*_{reg}$ and $\hat{f}^*_{reg}$ respectively the minimisers in the definition of $\regD(z|Q)$ and $\regD(z|Q_n)$ as in Def.~\ref{def:regDepth}, then:
    \begin{equation*}
        | \regD(z|Q_n)-\regD(z|Q)| \leq \sup_{f\in\cH} |\mathbb{E}_{x\sim Q_n}l_+(f(x)-\mathbb{E}_{x\sim Q}l_+(f(x))| + |reg(f^*_{reg})-reg(\hat{f}^*_{reg})|
    \end{equation*}
\end{lemma}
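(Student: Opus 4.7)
The plan is to mimic the unregularised argument of Lemma~\ref{lemma:gen}, but replace the $\varepsilon$-approximate minimiser by the actual minimisers $f^*_{reg}, \hat{f}^*_{reg}$, whose existence and uniqueness are guaranteed by Definition~\ref{def:regDepth}. I will keep the shorthand $\Gamma(f,z,Q)=\bbE_{(x,y)\sim \LaDi} l(f(x),y)$, so that $\regD(z|Q)=\Gamma(f^*_{reg},z,Q)$ and $\regD(z|Q_n)=\Gamma(\hat{f}^*_{reg},z,Q_n)$, while the quantity actually being minimised is $\tilde\Gamma(f,z,Q)=\Gamma(f,z,Q)+reg(f)$.

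First I would bound one direction. Writing
\begin{equation*}
\regD(z|Q_n)-\regD(z|Q)=\bigl[\Gamma(\hat{f}^*_{reg},z,Q_n)-\Gamma(f^*_{reg},z,Q_n)\bigr]+\bigl[\Gamma(f^*_{reg},z,Q_n)-\Gamma(f^*_{reg},z,Q)\bigr],
\end{equation*}
the first bracket is handled by the optimality of $\hat{f}^*_{reg}$ for $\tilde\Gamma(\cdot,z,Q_n)$: from $\tilde\Gamma(\hat{f}^*_{reg},z,Q_n)\leq\tilde\Gamma(f^*_{reg},z,Q_n)$ one obtains $\Gamma(\hat{f}^*_{reg},z,Q_n)-\Gamma(f^*_{reg},z,Q_n)\leq reg(f^*_{reg})-reg(\hat{f}^*_{reg})$. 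For the second bracket, the $l_-(f(z))$ terms from~(\ref{l+l-}) cancel since the same $f^*_{reg}$ is evaluated on the same point $z$, leaving only $\tfrac{1}{2}\bigl(\bbE_{Q_n}l_+(f^*_{reg})-\bbE_Q l_+(f^*_{reg})\bigr)$, which is upper bounded by $\sup_{f\in\cH}|\bbE_{Q_n}l_+(f)-\bbE_Q l_+(f)|$ (the factor $\tfrac{1}{2}$ being absorbed).

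Then I would run the symmetric decomposition with $Q$ and $Q_n$ swapped, inserting $\Gamma(\hat{f}^*_{reg},z,Q)$ as the intermediate term and using optimality of $f^*_{reg}$ for $\tilde\Gamma(\cdot,z,Q)$, to obtain the matching upper bound on $\regD(z|Q)-\regD(z|Q_n)$. Combining the two one-sided inequalities and taking absolute values on both sides of the regularisation gap yields exactly the statement.

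I do not anticipate a real obstacle: the only subtlety is that $\Gamma$ and $\tilde\Gamma$ differ by the regularisation term, so the optimality of the minimisers does not translate directly into an inequality on the bare losses. That is precisely where the extra $|reg(f^*_{reg})-reg(\hat{f}^*_{reg})|$ term in the bound comes from, and it is the one place where the argument genuinely departs from the unregularised proof. No Lipschitz or convexity assumptions on $\cH$, $l$ or $reg$ are used beyond what is already built into Definition~\ref{def:regDepth}.
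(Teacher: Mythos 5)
Your proposal is correct and follows essentially the same route as the paper: the paper rewrites the depth difference as a difference of the regularised objectives $F$ plus the regularisation gap and then reuses the add-and-subtract argument of Lemma~\ref{lemma:gen} (with exact minimisers in place of $\varepsilon$-approximate ones), which is exactly your two-bracket decomposition where optimality of $\hat{f}^*_{reg}$ (resp.\ $f^*_{reg}$) for the regularised objective produces the $reg(f^*_{reg})-reg(\hat{f}^*_{reg})$ term and the distribution-swap bracket is controlled by the generalisation error of $l_+$ after the $l_-(f(z))$ terms cancel. Your write-up is in fact more explicit than the paper's, which leaves the second half as ``similar to Lemma~\ref{lemma:gen}.''
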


The generalisation error has been well studied, enabling to gives convergence rates with the previous lemmas.
On the other hand, the regularisation function are often norms and we will see in the next section how to bound the last term in some case.  The minimiser depends on $z$ so the bound here is not uniform.



\section{Linear classifiers on features with $L_2$ regularisation}\label{sec:linear}
%
%
In this section, to prove theoretical results, we focus on simple classifiers of the shape 
$f:x \mapsto \langle w, \varphi(x)\rangle$ where $\varphi:\cX \to \cF$ is some defined feature map to some feature Hilbert space $\cF$, and $w\in \cF$ are linear weights. To avoid overfitting that would collapse the depth to zero, we add some (Hilbert norm) regularisation term $\lambda ||w||^2$, with $\lambda>0$. The regularised loss used in Def.~\ref{def:regDepth} is then:
\begin{equation}
    F_\lambda(w,x,y) = l(\langle w, \varphi(x)\rangle,y) + \lambda ||w||^2 \label{eq:Flinear}
\end{equation}
We also suppose $l$ convex with respect to its first argument, so that the functional $w \mapsto \bbE_{(x,y)\sim \LaDi} F_\lambda(w,x,y)$ is $2\lambda$-strictly convex with respect to $||\cdot||$, 
therefore it admits a unique minimiser, and the regularised depth is well-defined. 
As it is classically done, an intercept term $b$ can be added to consider classifiers of the shape $f:x \mapsto \langle w, \varphi(x)\rangle\ + b$, by adding an extra dimension and using the feature map $\tilde{\varphi} : x \mapsto [\varphi(x),1]$ and weight $\tilde{w} = [w,b]$.
Furthermore, we may use the following assumptions:
\begin{assumptionp}{W} \label{assum:W}
    All weights $w\in\cF$ are such that $||w||\leq W$.
    \end{assumptionp}
\begin{assumptionp}{B} \label{assum:B}
    All $x$ in the support of $Q$ are such that $||\varphi(x)||\leq B$.
    \end{assumptionp}

We further focus on classical well-known machine learning algorithms with convex loss:
\begin{description}
    \item \emph{Logistic Regression Depth} (${LR\regD}$): depth based on the logistic regression with:
    \begin{itemize}
         \item \textbf{Features}: plain features $\varphi: \bbR^d \to \bbR^{d+1}, x \mapsto [x,1]$.
         \item \textbf{Loss}: Logistic loss defined by $l_{log}(f(x),y) = (1+e^{-yf(x)})$.
    \end{itemize}
    \item \emph{Support Vector Machine Depth} (${SVM\regD}$): depth based on (soft-margin) Support Vector Machine with kernel methods, in particular:
    \begin{itemize}
        \item \textbf{Features}: $\varphi$ being the canonical features map of some bounded kernel in some Reproducible Kernel Hilbert Space (see~\citet{berlinet2011reproducing} and some background in Appendix~\ref{appendsec:RKHS}).
        \item \textbf{Loss}: Hinge loss defined by $l_{hinge}(f(x),y)= \max(0,1-yf(x))$.
    \end{itemize}
\end{description}
For logistic regression, we also consider the unregularised version $LRD$.
Notice that for the unregularised $LRD$ with Assumption~\ref{assum:W}, the optimisation problem:
\begin{equation*}
\begin{aligned}
\min_{w} \quad & \bbE_{(x,y)\sim\LaDi}l_{log}(\langle w,x\rangle)\\
\textrm{s.t.} \quad & ||w||^2 \leq W^2\\
\end{aligned}
\end{equation*}
has a convex objective with convex constraint on a convex domain and is feasible, then it verifies Slater's condition (see for instance~\citet{Boyd_Vandenberghe_2004}) so it has strong duality with its Lagrangian dual problem, which corresponds to the regularised $LR\regD$ for some $\lambda$ that depends however on $W$ and $z$, and therefore may be different for different points.

On the other hand, for regularised versions, Assumption~\ref{assum:W} is implied:
\begin{proposition}
    For any regularised depth with regularised loss $F_\lambda$ with $L_2$ regularisation as in ~\eqref{eq:Flinear}, Assumption~\ref{assum:W} can always be considered verified.
\end{proposition}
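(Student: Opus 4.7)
The plan is to exploit the fact that the zero weight $w=0$ is always a feasible baseline against which the optimal weight must compete. Since the objective $w \mapsto \bbE_{(x,y)\sim \LaDi} F_\lambda(w,x,y)$ is strictly convex (by convexity of $l$ in its first argument and $2\lambda$-strict convexity of $\lambda\|\cdot\|^2$), it admits a unique minimiser $w^*$. Evaluating at $w=0$ gives
\begin{equation*}
\bbE_{(x,y)\sim \LaDi} F_\lambda(0,x,y) = \tfrac{1}{2} l_+(0) + \tfrac{1}{2} l_-(0),
\end{equation*}
a finite constant that depends only on the loss, not on $Q$, $\varphi$, or $z$. Optimality of $w^*$ then yields $\bbE_{(x,y)\sim\LaDi} l(\langle w^*,\varphi(x)\rangle,y) + \lambda \|w^*\|^2 \leq \tfrac{1}{2}(l_+(0)+l_-(0))$.

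Since $l$ takes values in $\bbR_+$ (as declared in the notation paragraph), the expected loss term on the left-hand side is non-negative. Dropping it and dividing by $\lambda$ gives the clean bound
\begin{equation*}
\|w^*\|^2 \;\leq\; \frac{l_+(0)+l_-(0)}{2\lambda}.
\end{equation*}
Taking $W := \sqrt{(l_+(0)+l_-(0))/(2\lambda)}$ thus produces a constant, independent of $z$ and of $Q$, such that the regularised minimiser automatically lies in the ball $\{w:\|w\|\leq W\}$; restricting the search space to this ball leaves the infimum unchanged, which is exactly the content of Assumption~\ref{assum:W}.

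There is no real obstacle beyond ensuring that the bound be uniform in $z$, so that Assumption~\ref{assum:W} can be invoked globally when computing the depth; this is handled by the observation that $l_+(0)+l_-(0)$ is a purely data-independent quantity. The only care point in presentation is to note explicitly that we are using the non-negativity of $l$ and the availability of $w=0\in\cF$ as a comparison point, both of which follow from the standing setup of Section~\ref{sec:linear}.
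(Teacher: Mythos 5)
Your proof is correct and follows exactly the paper's argument: compare the minimiser against the candidate $w=0$, use non-negativity of the loss to drop the expected-loss term, and obtain $\|w^*_\lambda\|\leq\sqrt{(l_+(0)+l_-(0))/(2\lambda)}$, which is precisely the constant $W=\sqrt{D/\lambda}$ with $D=\tfrac12(l_+(0)+l_-(0))$ used in the paper. Your added remark that the bound is uniform in $z$ and $Q$ is a small but worthwhile clarification beyond what the paper states explicitly.
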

\begin{proof}
    When plugging the candidate $w=0$ into the regularised objective, we get some value $D = \bbE_{(x,y)\sim \LaDi} F_\lambda(0,x,y) = \frac{1}{2}(l_+(0)+l_-(0))$. 
    We deduce that since $\bbE_{(x,y)\sim \LaDi} F_\lambda(w_\lambda^*,x,y) \leq \bbE_{(x,y)\sim \LaDi} F_\lambda(0,x,y)$, we must have $||w_\lambda^*||\leq \sqrt{\frac{D}{\lambda}}$. So we can restrict our search during minimisation to $w$ such that $||w||\leq\sqrt{\frac{D}{\lambda}}$, and consider Assumption~\ref{assum:W} verified for $W=\sqrt{\frac{D}{\lambda}}$. This may be helpful to take into consideration when choosing $\lambda$. \qed
\end{proof}

Assumption~\ref{assum:B} is verified for $LRD$/$LR\regD$ when the distribution $Q$ has bounded support, and also verified for $SVM\regD$ when using a bounded kernel ($\forall x \in \cX, ||\varphi(x)||^2=k(x,x)\leq B$). This covers a lot of kernels such as Gaussian, Laplacian and inverse multiquadric among others.

\subsection{Properties}

The following properties are properties that are often satisfied by classical depths.

$\triangleright$  \textbf{Boundedness}
\begin{proposition}
    Loss depths (regularised or not) with linear weights as in eq.~(\ref{eq:Flinear}) using the logistic loss or the hinge loss give values in $[0,1]$.
\end{proposition}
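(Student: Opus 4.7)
The plan is to obtain both bounds by very standard reasoning. The lower bound is immediate since both $l_{log}$ and $l_{hinge}$ are non-negative functions, so any expectation of the form $\bbE_{(x,y)\sim\LaDi} l(f(x),y)$ is nonnegative; hence so is the infimum (for $\lossD$) and so is the value at the unique minimiser (for $\regD$).

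For the upper bound, I would use the constant classifier $w=0$ as a test point. First I compute the value of the objective at $w=0$: for the hinge loss one has $l_{hinge}(0,\pm 1)=\max(0,1)=1$, so
\begin{equation*}
\bbE_{(x,y)\sim\LaDi} l_{hinge}(0,y) \;=\; \tfrac{1}{2}\cdot 1+\tfrac{1}{2}\cdot 1 \;=\;1,
\end{equation*}
and for the logistic loss $l_{log}(0,\pm 1)=\log 2$, so the expectation equals $\log 2 \leq 1$ (assuming the standard convention $l_{log}(f(x),y)=\log(1+e^{-yf(x)})$; the same argument with $\log_2$ gives exactly $1$).

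For the unregularised depth, by taking an infimum over $\cH$, which contains $w=0$,
\begin{equation*}
\lossD(z|Q) \;\leq\; \bbE_{(x,y)\sim\LaDi} l(0,y) \;\leq\; 1.
\end{equation*}
For the regularised depth, I would exploit that $\lambda\|w\|^2\geq 0$ and that $F_\lambda(0,x,y)=l(0,y)$: the unique minimiser $w^*$ satisfies
\begin{equation*}
\bbE_{(x,y)\sim\LaDi} l(\langle w^*,\varphi(x)\rangle,y) \;\leq\; \bbE_{(x,y)\sim\LaDi} F_\lambda(w^*,x,y) \;\leq\; \bbE_{(x,y)\sim\LaDi} F_\lambda(0,x,y) \;=\; \bbE_{(x,y)\sim\LaDi} l(0,y) \;\leq\; 1,
\end{equation*}
so $\regD(z|Q)\leq 1$. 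Combining with nonnegativity yields values in $[0,1]$.

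Honestly, there is no real obstacle here: everything reduces to the fact that the zero weight vector is admissible and that the regulariser is nonnegative. The only mild subtlety is remembering that $\regD$ is defined as the unpenalised loss evaluated at $w^*$ (not the penalised objective), which is why the $\lambda\|w^*\|^2$ term is simply dropped on the left of the chain above.
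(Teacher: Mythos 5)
Your proof is correct and takes essentially the same route as the paper's: evaluate the (regularised) objective at the admissible candidate $w=0$, use non-negativity of the loss and of the regulariser, and conclude that both the infimum and the value at the unique minimiser lie in $[0,1]$. You are in fact slightly more careful than the paper, which asserts $\frac{1}{2}(l_+(0)+l_-(0))=1$ for the logistic loss — an identity that only holds under a base-2 logarithm convention (the paper's literal definition $l_{log}(f(x),y)=(1+e^{-yf(x)})$ would even give $2$) — whereas your observation that $\log 2\leq 1$ under the standard convention closes that small gap.
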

\begin{proof}
    For the logistic and hinge losses, using $w=0$, we get $\bbE_{(x,y)\sim \LaDi} F_\lambda(0,x,y) = \frac{1}{2}(l_+(0)+ l_-(0)) = 1$, so the depth will be smaller or equal to $1$. \qed
\end{proof}

$\triangleright$  \textbf{Convexity with plain features}
\begin{proposition}\label{prop:convex}
    For loss depth as in Def.~\ref{def:lossDepth} with a monotone loss $l_-$ and using plain features ($\varphi$ being identity or $\varphi:x \mapsto [x,1]$ for the intercept term), the depth is convex i.e. the depth regions $D_\alpha$ are convex sets.
\end{proposition}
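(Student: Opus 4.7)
The goal is to show that for any $\alpha\in\bbR$, the super-level set $D_\alpha=\{z\in\cX:\lossD(z|Q)\geq \alpha\}$ is convex, i.e.\ $\lossD(\cdot|Q)$ is quasi-concave in $z$. The plan is to prove the slightly stronger pointwise inequality
\begin{equation*}
\lossD(\lambda z_1+(1-\lambda)z_2\,|\,Q)\ \geq\ \min\bigl(\lossD(z_1|Q),\,\lossD(z_2|Q)\bigr)
\end{equation*}
for every $z_1,z_2\in\cX$ and $\lambda\in[0,1]$, which immediately implies convexity of $D_\alpha$.

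The key observation is that for any linear classifier $f(x)=\langle w,x\rangle+b$ (which is what ``plain features'' gives us, with the intercept absorbed into $\tilde{\varphi}$), the scalar $f(z_\lambda)$ with $z_\lambda=\lambda z_1+(1-\lambda)z_2$ is simply the convex combination $\lambda f(z_1)+(1-\lambda)f(z_2)$, hence lies in the interval $[\min_i f(z_i),\max_i f(z_i)]$. Since $l_-$ is monotone, it is in particular monotone on this interval, so
\begin{equation*}
l_-(f(z_\lambda))\ \geq\ \min\bigl(l_-(f(z_1)),\,l_-(f(z_2))\bigr).
\end{equation*}
This is the only place where monotonicity of $l_-$ and linearity of $f$ (plain features) are used.

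Writing $\Gamma(f,z,Q)=\tfrac12\bbE_{x\sim Q}l_+(f(x))+\tfrac12 l_-(f(z))$ as in the proof of Lemma~\ref{lemma:gen}, the inequality above yields, for every $f\in\cH$,
\begin{equation*}
\Gamma(f,z_\lambda,Q)\ \geq\ \min_{i\in\{1,2\}}\Gamma(f,z_i,Q)\ \geq\ \min_{i\in\{1,2\}}\inf_{f'\in\cH}\Gamma(f',z_i,Q)\ =\ \min_{i}\lossD(z_i|Q),
\end{equation*}
since the $l_+$ term does not depend on $z$. Taking the infimum over $f\in\cH$ on the left gives the desired quasi-concavity; no $\varepsilon$-argument is needed because the lower bound on the right is independent of $f$. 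Applying this with $z_1,z_2\in D_\alpha$ yields $z_\lambda\in D_\alpha$.

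I expect no real obstacle here: the proof is essentially the one-line remark that composing a monotone scalar function with an affine function preserves quasi-concavity, combined with the fact that an infimum of quasi-concave functions in $z$ (the $\Gamma(f,\cdot,Q)$) is quasi-concave. The only subtlety worth flagging in the write-up is to make explicit that ``plain features'' is exactly what allows $f(z_\lambda)$ to equal the convex combination of $f(z_1)$ and $f(z_2)$, and that monotonicity of $l_-$ is exactly what converts this convex combination into the desired $\min$ inequality.
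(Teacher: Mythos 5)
Your proof is correct and rests on the same key observation as the paper's: for plain (affine) features $f(z_\lambda)$ is a convex combination of $f(z_1)$ and $f(z_2)$, so monotonicity of $l_-$ gives $l_-(f(z_\lambda))\geq\min\bigl(l_-(f(z_1)),l_-(f(z_2))\bigr)$. The paper packages this as a reductio ad absurdum with a case analysis on the sign of $\langle w^*,z_1-z_2\rangle$ and on the direction of monotonicity, whereas your direct quasi-concavity argument (an infimum over $f$ of functions that are quasi-concave in $z$ is quasi-concave) reaches the same conclusion more cleanly and without the near-minimizer $\varepsilon$-bookkeeping.
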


The proof is in Appendix~\ref{appendsec:convex}, and also works for the third variant of loss depth using 2-norm regularisation inside Def.~\ref{def:lossDepth}, where the regularisation is included into the total loss function.

So by monotonicity of the logistic loss:
\begin{corollary}
    $LRD$ is a convex depth.
\end{corollary}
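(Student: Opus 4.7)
The plan is to invoke Proposition~\ref{prop:convex} directly by verifying its two hypotheses for logistic regression depth. The proposition requires (i) the use of plain features, possibly augmented with an intercept, and (ii) the monotonicity of $l_-$. Both conditions will turn out to be immediate consequences of the definition of $LRD$ given in the bulleted list.

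First, I would check the feature map condition. By the definition of $LRD$ stated above, the feature map is $\varphi : \bbR^d \to \bbR^{d+1}$, $x \mapsto [x,1]$, which is exactly the ``plain features with intercept'' form explicitly allowed by Proposition~\ref{prop:convex}. Thus the feature-map hypothesis is met by construction and requires no further work.

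Second, I would check monotonicity of $l_-$ for the logistic loss. Reading the logistic loss as $l_{log}(\hat y, y) = \log(1 + e^{-y\hat y})$, the negative-label slice is $l_-(\hat y) = \log(1 + e^{\hat y})$, which is strictly increasing in $\hat y$. Hence $l_-$ is monotone, satisfying the second hypothesis.

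With both hypotheses verified, Proposition~\ref{prop:convex} applies verbatim and yields that every depth region $D_\alpha$ of $LRD$ is a convex set, which is the definition of a convex depth. There is essentially no obstacle: the corollary is purely a specialisation. The only mildly delicate point is to confirm that $LRD$, as defined, is treated as an instance of Definition~\ref{def:lossDepth} (and not only of Definition~\ref{def:regDepth}), which is exactly what the paper states by considering ``the unregularised version $LRD$'' alongside $LR\regD$.
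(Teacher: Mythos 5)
Your proof is correct and matches the paper's argument exactly: the corollary is obtained by checking that $LRD$ uses plain features with intercept and that $l_-$ for the logistic loss is monotone increasing, then applying Proposition~\ref{prop:convex}. (As a minor aside, whether one reads the logistic loss as $1+e^{-yf(x)}$ as typeset in the paper or as the standard $\log(1+e^{-yf(x)})$, the negative slice is increasing either way, so your verification stands.)
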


$\triangleright$  \textbf{Lipschitzness}
We have the following corollary:
\begin{corollary}
    $LRD$ with Assumption~\ref{assum:W}, $LR\regD$ and $SVM\regD$ with Gaussian kernel are Lipschitz with respect to $z$ using the Euclidean distance and with respect to $Q$ using the $W_c$ distance with $c$ as Euclidean distance. 
\end{corollary}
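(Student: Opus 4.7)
The plan is to apply Proposition~\ref{prop:Lip} in each of the three cases, which requires verifying that (i) the loss is Lipschitz in its first argument and (ii) every classifier in the admissible class $\cH$ is Lipschitz with respect to the Euclidean distance, with a common Lipschitz constant.

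For the loss side, I would first check that both the logistic loss $l_{\log}(\hat y, y) = \log(1+e^{-y\hat y})$ and the hinge loss $l_{\mathrm{hinge}}(\hat y, y) = \max(0, 1 - y\hat y)$ are $1$-Lipschitz in $\hat y$ uniformly in $y \in \{-1,1\}$: for the logistic loss, differentiate to get $|\partial_{\hat y} l_{\log}| = |y|/(1+e^{y\hat y}) \leq 1$; for the hinge loss, it is piecewise linear with slopes in $\{-1,0,1\}$. So in all three cases $L=1$.

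For the classifier side, the goal is to produce a finite $W$ such that every $f \in \cH$ is $W$-Lipschitz in the Euclidean metric $c$. For $LRD$ under Assumption~\ref{assum:W}, the classifiers are $f(x) = \langle w, x\rangle$ with $\|w\| \leq W$, so Cauchy--Schwarz gives $|f(x)-f(y)| \leq W\|x-y\|$ directly. For $LR\regD$, the proposition proved just above (the $w=0$ comparison) yields $\|w_\lambda^*\| \leq \sqrt{D/\lambda}$ with $D = \tfrac12(l_+(0)+l_-(0))=1$, so one can restrict the search to weights of norm at most $\sqrt{1/\lambda}$ and invoke the same Cauchy--Schwarz bound. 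For $SVM\regD$ with Gaussian kernel, the canonical feature map $\varphi$ into the RKHS satisfies
\begin{equation*}
\|\varphi(x)-\varphi(y)\|^2 = k(x,x)+k(y,y)-2k(x,y) = 2\bigl(1-e^{-\gamma\|x-y\|^2}\bigr) \leq 2\gamma\|x-y\|^2,
\end{equation*}
so $\varphi$ is $\sqrt{2\gamma}$-Lipschitz; combining with $\|w_\lambda^*\| \leq \sqrt{1/\lambda}$ from the same regularisation argument (the hinge loss also satisfies $l_+(0)+l_-(0)=2$) yields that every admissible classifier is $\sqrt{2\gamma/\lambda}$-Lipschitz.

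Having assembled $L=1$ and the corresponding $W$ in each case, Proposition~\ref{prop:Lip} immediately gives Lipschitzness of the depth in $z$ under the Euclidean metric and in $Q$ under $W_c$ with $c$ Euclidean, with explicit constants $\tfrac12 W$. The only non-routine step is the Gaussian-kernel Lipschitz bound on $\varphi$, and that reduces to the elementary inequality $1-e^{-t} \leq t$ for $t\geq 0$; everything else is a direct bookkeeping of the constants already available in the paper.
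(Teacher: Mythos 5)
Your proof is correct and follows essentially the same route as the paper's: verify that the logistic and hinge losses are $1$-Lipschitz, bound the Lipschitz constant of each classifier via Cauchy--Schwarz together with the feature-map estimate $\|\varphi(x)-\varphi(y)\|=\sqrt{2(1-k(x,y))}\leq\sqrt{2\gamma}\,\|x-y\|$ (using $1-e^{-t}\leq t$), and then invoke Proposition~\ref{prop:Lip}. The only difference is that you spell out the $w=0$ comparison giving $\|w_\lambda^*\|\leq\sqrt{D/\lambda}$ for the regularised cases, whereas the paper delegates this to its earlier proposition stating that Assumption~\ref{assum:W} is automatically satisfied under $L_2$ regularisation; the substance is identical.
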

\begin{proof}
    We denotes $d_\varphi(x,y) = ||\varphi(x)-\varphi(y)||$.
    When Assumption~\ref{assum:W} is verified, if $d_\varphi(x,y)\leq K||x-y||$ for some constant $K$, then $f:x\mapsto \langle w, \varphi(x)\rangle$ is $WK$-Lipschitz by Cauchy-Schwartz. In case of $LRD$, $d_\varphi$ is just the usual Euclidean distance of $\bbR^d$ so it is true with $K=1$. For $SVMD$ with Gaussian kernel, we have $d_\varphi(x,y) = \sqrt{2(1-k(x,y))} \leq  \sqrt{2\gamma}||x-y||$ using $1-e^{-x}\leq x$, therefore it is true with $K=\sqrt{2\gamma}$.
    Proposition~\ref{prop:Lip} allows to conclude since the hinge loss and logistic loss are $1$-Lipschitz. \qed
\end{proof}

$\triangleright$  \textbf{Upper semicontinuity}
Upper semicontinuity means that the depth regions $D_\alpha = \{z|\lossD(z|Q)\geq \alpha \}$ are closed.
Since $SVM\regD$ and $LR\regD$ (or $LRD$ with Assumption~\ref{assum:W}) are Lipschitz with respect to $z$, they are continuous and therefore their depth regions are closed.

$\triangleright$  \textbf{Computational complexity with guarantees}
For $SVM\regD$, we can solve the problem in $O(n^2)$ and under some conditions get an $\varepsilon$-solution in $O(n^{1+o(1)}\log(1/\varepsilon))$\citep{Gu25}, and for $LR\regD$ by smoothness and strong convexity (see Appendix~\ref{appendsec:optim}), it is solvable by gradient descent in $O(n d \log(1/\varepsilon))$ for dimension $d$.

$\triangleright$  \textbf{Statistical convergence rates}
Using the lemmas from section~\ref{sec:rates}, we prove convergence rates for classical methods by bounding their generalisation error. To do so, it will be useful to consider the Rademacher complexity of our set of classifiers $\cH$ for $n$ samples, defined as:
\begin{equation*}
    \cR_{Q,n}(\cH) = \mathbb{E}_{(x_1,\dots,x_n) \sim Q^n} \mathbb{E}_{(\sigma_1,...,\sigma_n) \sim Rad^n} \sup_{\mathcal{F}} (\frac{1}{n}\sum_{i=1}^n \sigma_i f(z_i))
\end{equation*}
where the Rademacher distribution $Rad$ is uniform over $\{-1,1\}$. Rademacher complexities are used to bound generalisation errors. For linear classifiers verifying Assumptions~\ref{assum:B} and \ref{assum:W}, \citet{kakade2008complexity} showed Rademacher complexity of order $O(n^{-1/2})$. Combining with Lemma~\ref{lemma:gen} and \ref{lem:reggen} (see full proof in Appendix~\ref{appendsec:statrates}) gives:
\begin{theorem}\label{prop:rates}
    Under Assumptions~\ref{assum:B} and \ref{assum:W}, $LRD$ and $SV\regD$ have convergence rates:
    \begin{equation*}
        \sup_{z\in\mathcal{X}}|LRD(z|Q_n)-LRD(z|Q)| = O_Q(n^{-1/2}), \text{ and }
    \end{equation*}
    \begin{equation*}
            |SV\regD(z|Q_n)-SV\regD(z|Q)| = O_Q(n^{-1/2}).
        \end{equation*}
\end{theorem}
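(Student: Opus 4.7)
The plan is to invoke the two lemmas of Section~\ref{sec:rates} to reduce both statements to bounding the uniform generalisation error of the loss class $\{l_+ \circ f : f \in \cH\}$, and then to control that error by Rademacher complexity. For $LRD$, Lemma~\ref{lemma:gen} gives
\[
\sup_{z\in\cX}|LRD(z|Q_n)-LRD(z|Q)| \;\leq\; \sup_{f\in\cH}\bigl|\bbE_{Q_n}l_+(f) - \bbE_Q l_+(f)\bigr|,
\]
so the uniformity in $z$ is automatic. For $SV\regD$, Lemma~\ref{lem:reggen} yields the same generalisation-error term plus the residual $\lambda\bigl|\|w^*\|^2 - \|\hat w^*\|^2\bigr|$, which I will treat separately, and this residual depends on $z$ through the minimisers, explaining why the $SV\regD$ bound is only pointwise.

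For the common generalisation-error term I would apply the standard symmetrisation/McDiarmid argument to get
\[
\bbE\sup_{f\in\cH}\bigl|\bbE_{Q_n}l_+(f)-\bbE_Q l_+(f)\bigr| \;\leq\; 2\,\cR_{Q,n}(l_+\circ\cH),
\]
with a concentration tail of order $n^{-1/2}$. Talagrand's contraction principle then peels off $l_+$, since the logistic loss and the hinge loss are both $1$-Lipschitz in their first argument, giving $\cR_{Q,n}(l_+\circ\cH)\le \cR_{Q,n}(\cH)$. The key input is the Rademacher complexity bound of~\citet{kakade2008complexity} for linear predictors in a Hilbert space: under Assumptions~\ref{assum:W} and \ref{assum:B}, $\cR_{Q,n}(\cH) \leq BW/\sqrt{n}$. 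Combining these three steps delivers the $O_Q(n^{-1/2})$ rate for $LRD$, uniformly in $z$, and controls the first summand in the $SV\regD$ bound; note that Assumption~\ref{assum:B} is satisfied by any bounded kernel, in particular the Gaussian kernel typically used in $SV\regD$.

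It remains to handle the regularisation-difference term $\lambda\bigl|\|w^*(z)\|^2-\|\hat w^*(z)\|^2\bigr|$ in the $SV\regD$ bound. Since the regularised objective $w\mapsto \bbE_{\LaDi}F_\lambda(w,x,y)$ is $2\lambda$-strongly convex, optimality of $w^*$ gives $\Gamma(\hat w^*,z,Q)-\Gamma(w^*,z,Q)\geq \lambda\|\hat w^*-w^*\|^2$, and the left-hand side is upper bounded by twice the uniform generalisation error (adding and subtracting the empirical objective and using optimality of $\hat w^*$ on $Q_n$, together with $|reg(w^*)-reg(\hat w^*)|\le 2\lambda W^2$ from Assumption~\ref{assum:W}, which is automatic here by the proposition proved earlier in Section~\ref{sec:linear}). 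Hence $\|\hat w^*-w^*\|=O_Q(n^{-1/4})$ at worst, but using the stronger Bousquet--Elisseeff-type stability available for $L_2$-regularised ERM with a Lipschitz convex loss one gets $\|\hat w^*-w^*\|=O_Q(n^{-1/2})$, and then $\lambda\bigl|\|w^*\|^2-\|\hat w^*\|^2\bigr| \leq 2\lambda W\|w^*-\hat w^*\|=O_Q(n^{-1/2})$.

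The main obstacle is this last step: making the regularisation-difference term decay at rate $n^{-1/2}$ rather than the slower rate that a naive application of strong convexity would give. The cleanest fix is to invoke algorithmic stability of kernel ridge-type estimators, which yields the correct $n^{-1/2}$ rate for $\|\hat w^*-w^*\|$ (with constants depending on $\lambda$, $B$, and the Lipschitz constant of the hinge loss). Once this is in place, the two summands in Lemma~\ref{lem:reggen} are both $O_Q(n^{-1/2})$, completing the proof for $SV\regD$ and matching the classical $n^{-1/2}$ rate enjoyed by the halfspace depth.
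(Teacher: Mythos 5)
Your proposal follows essentially the same route as the paper's proof: both reduce the claim via Lemmas~\ref{lemma:gen} and~\ref{lem:reggen} to the uniform generalisation error of $l_+\circ\cH$, control that term by symmetrisation, the $1$-Lipschitz contraction of the hinge/logistic losses, and the Rademacher bound of \citet{kakade2008complexity} under Assumptions~\ref{assum:B} and~\ref{assum:W}, and then handle the regularisation-difference term for $SV\regD$ by writing $\lambda\bigl|\|w^*\|^2-\|\hat w^*\|^2\bigr|\le 2\lambda W\|w^*-\hat w^*\|$ and invoking a perturbation bound on $\|\hat w^*-w^*\|$ of order $n^{-1/2}$. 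The one imprecision is your citation for that last bound: Bousquet--Elisseeff replace-one stability controls the change of the minimiser when a single sample is swapped, not the distance between the empirical and population minimisers; the paper instead uses \citet{SteinwartChristmann2008} (proof of their Theorem~6.24, via their Corollary~5.10 together with a Hilbert-space concentration inequality), which is the result that actually yields $\|\hat w^*_\lambda-w^*_\lambda\|=O(n^{-1/2}/\lambda)$ and hence the claimed rate.
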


\section{Experiments}\label{sec:XP}
\subsection{Contamination}\label{sec:conta}
\paragraph{OCSVM vs $SVM\regD$ on contamination}
Data depths aim at being robust while totally unsupervised, which contrasts with many algorithms such as One-Class SVM (OC-SVM \citep{ocsvm}), deep Support Vector Data Descriptor (SVDD~\citep{ruff18a}), OC-GAN ~\citep{ocgan} among others, who actually uses a training dataset of supposedly ``inliers". However nothing guarantees that this training dataset is free from contamination at the beginning. In particular for OC-SVM, one has to set $\nu$ higher than the expected fraction of outliers but it may not be known in advance.  Here we consider a dataset of $n=200$ samples coming from a Gaussian $\cN([-1,-1],I_d)$ but where $10\%$ of the points have been replaced using $\cN([2,2],I_d)$. Fig.~\ref{fig:conta} displays a heatmap (the redder the deeper) of the scores with 5 score-level zones based on the top 5 quantiles for OC-SVM with $\nu =0.15$ as well as $SVM\regD$ (with $\lambda=1$), both with Gaussian kernel and $\gamma=1$ to match the variances of the Gaussian modes. Even with $\nu$ above the true contamination rate, OC-SVM gives a high score to the contamination cluster and good scores to the `bridge' between the two clusters. On the other hand with $SVM\regD$, the highest decile zone contains most of the authentic cluster and leaves out most of the contamination points.

    \begin{figure}[t!]
        \subfloat[$OC\text{-}SVM$]{
            \includegraphics[width=0.48\linewidth, trim={2cm 0.5 2cm 1.2cm},clip]{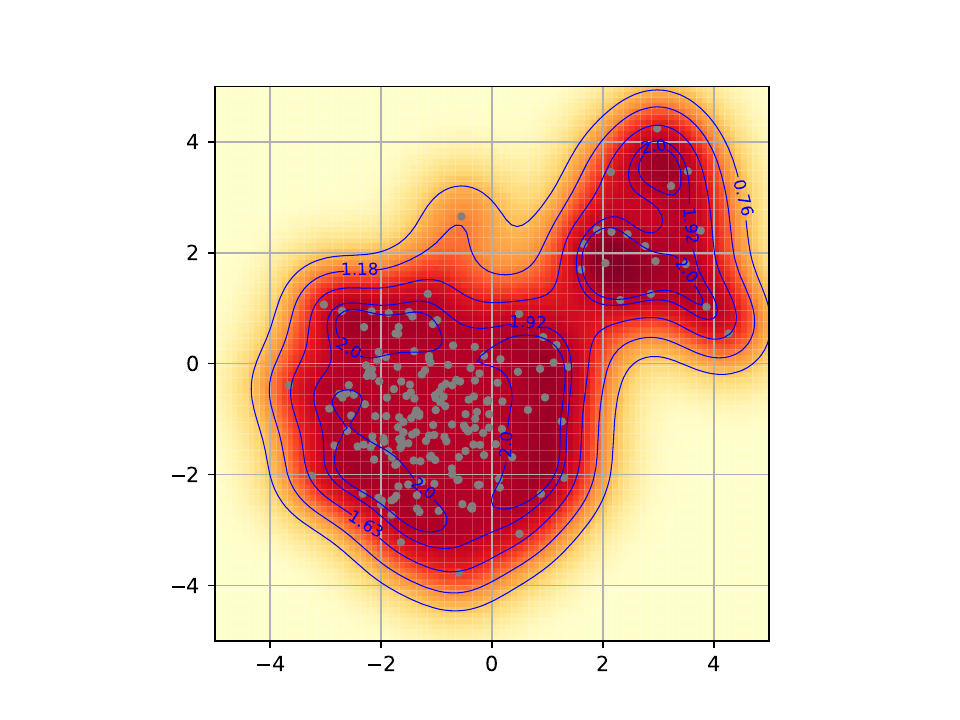}%
        }
        \subfloat[$SVM\regD$]{%
            \includegraphics[width=0.48\linewidth,, trim={2cm 0.5 2cm 1.2cm},clip]{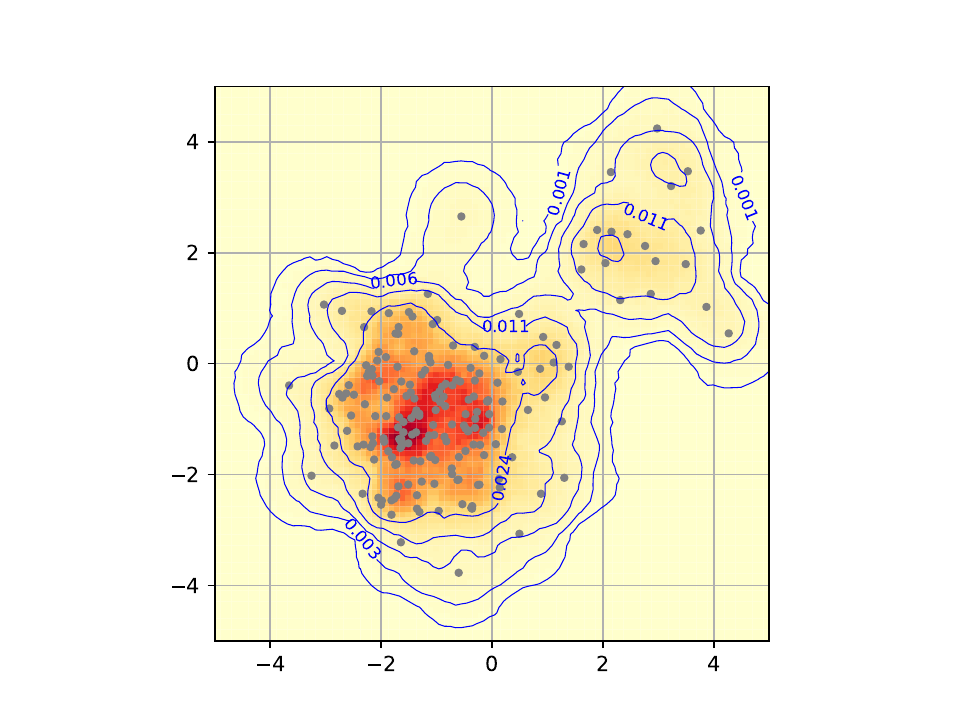}%
        }
        \caption{Score heatmaps with top 5 quantile zones of  $n=200$ samples from $\cN([-1,-1],I_d)$, with a $10\%$ contamination by the upper-right cluster centered in $[2,2]$. New points in the contaminated zone will be classified as inlier by OC-SVM.}
    \label{fig:conta}
    \end{figure}

\subsection{Anomaly Detection}\label{sec:AD}
\subsubsection{Computer Vision}\label{sec:CV}
\paragraph{Interpretability of Logistic Regression Depth}
Here we carried anomaly detection using a dataset of $n=1000$ samples of class 5 (\texttt{`sandal'}) from Fashion-MNIST~\citep{xiao2017/online} as distribution $Q_n$, and compute $LR\regD(\cdot|Q_n)$ (with $\lambda=1$) on another 1000 samples of class 5 (inliers) and 1000 samples from different classes (outliers). 
In Fig.~\ref{fig:interpret}, we display the deepest and shallowest outliers and inliers: the typical sandal of the dataset is a flip-flop so high-heel sandals stand out, 
and the closest outliers are sneakers rather than T-shirts as one can expect.
We also display the regression coefficients used to compute their $LR\regD$: whiter pixels corresponds to negative coefficients, greyish to zero, and blacker to positive coefficients. The negative coefficient is what allows the classifier to separate the sample from the rest of the data, while the black coefficients corresponds to pixels which are necessary for the class (of course reversing the colors of the dataset would reverse the reasoning). We can therefore observe that the sneaker is betrayed by its air cushion, while on the T-shirt, we can see the shadow of high-heel sandals. Thus we can achieve easy interpretability using a simple linear model when deep convolutional architectures are not necessary. In Appendix~\ref{appendsec:mnist}, we also display AUC scores on MNIST~\citep{mnist} and Fashion-MNIST datasets for $LR\regD$ and $SVM\regD$, and show that using a dataset of $n=1000$ or even $100$ samples achieve performances not far from using the full dataset, speeding up the process. In particular, $LR\regD$ (one neuron, the shallowest architecture!) achieve competitive scores close to some deep methods such as deep SVDD.
    \begin{figure}[!ht]
        \subfloat[Deepest inlier]{%
            \includegraphics[width=.24\linewidth, trim={2cm 1.5 2cm 1.5cm},clip]{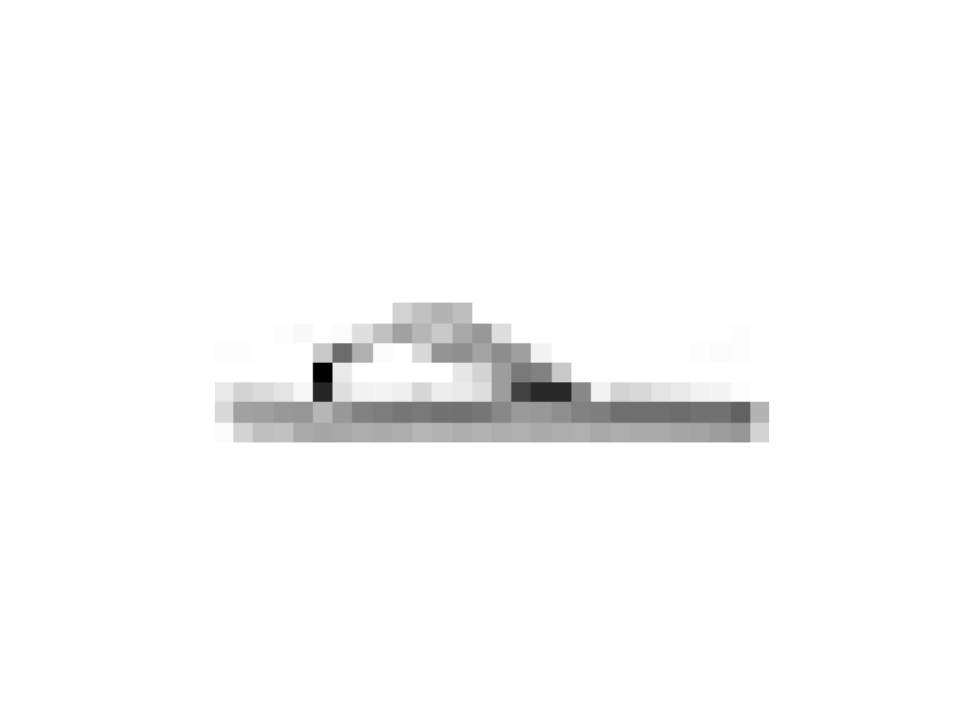}%
            \includegraphics[width=.24\linewidth, trim={2cm 1.5 2cm 1.5cm},clip]{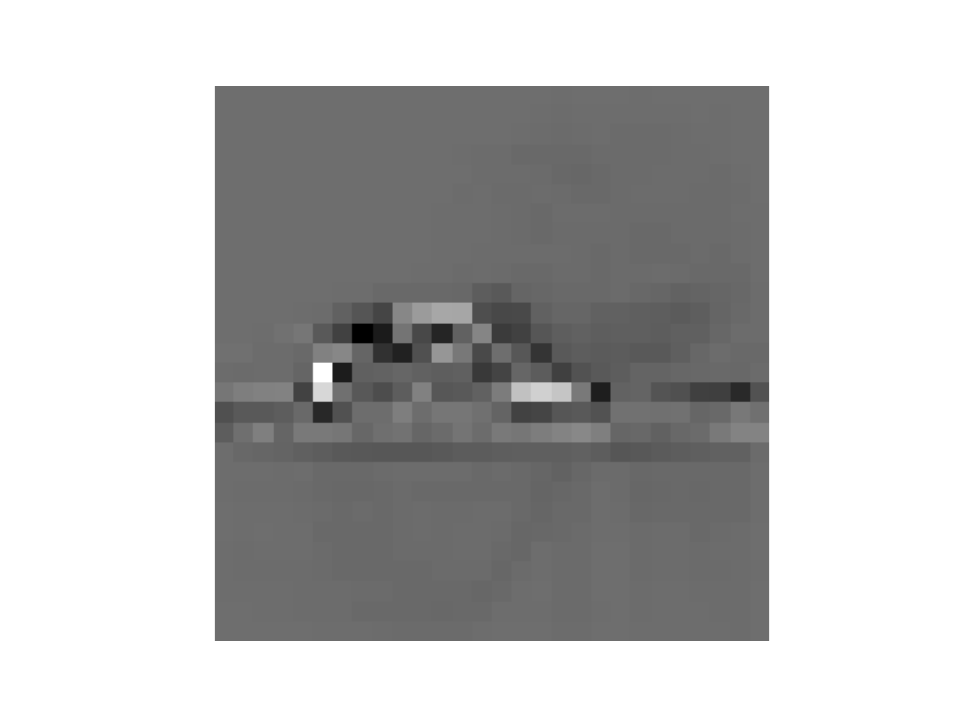}%
        }\hfill
        \subfloat[Least deep inlier]{%
            \includegraphics[width=.24\linewidth, trim={2cm 1.5 2cm 1.5cm},clip]{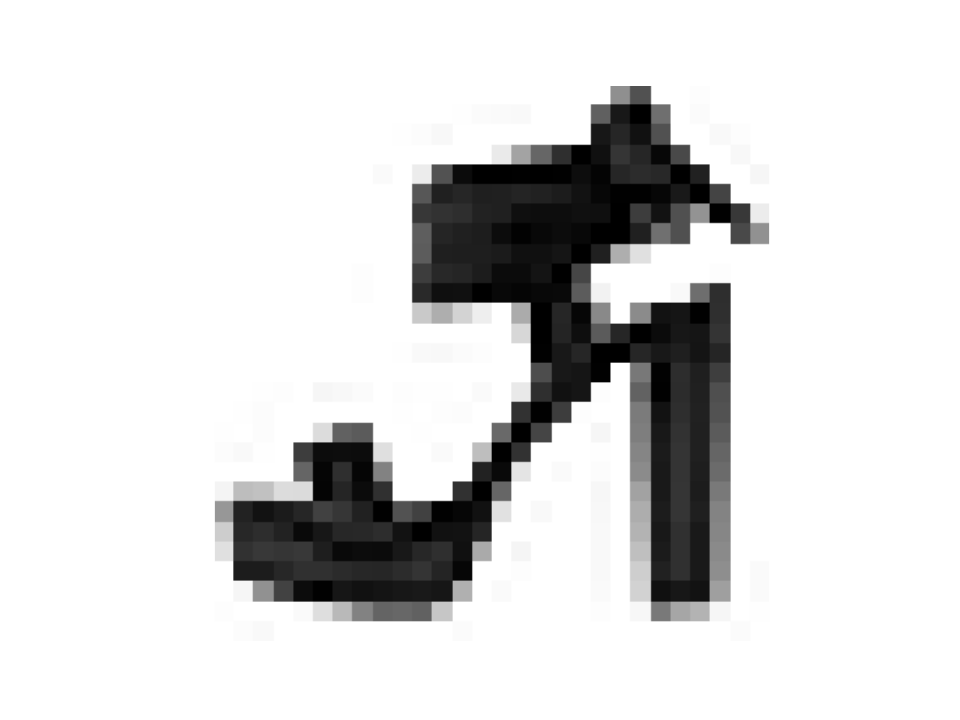}%
            \includegraphics[width=.24\linewidth, trim={2cm 1.5 2cm 1.5cm},clip]{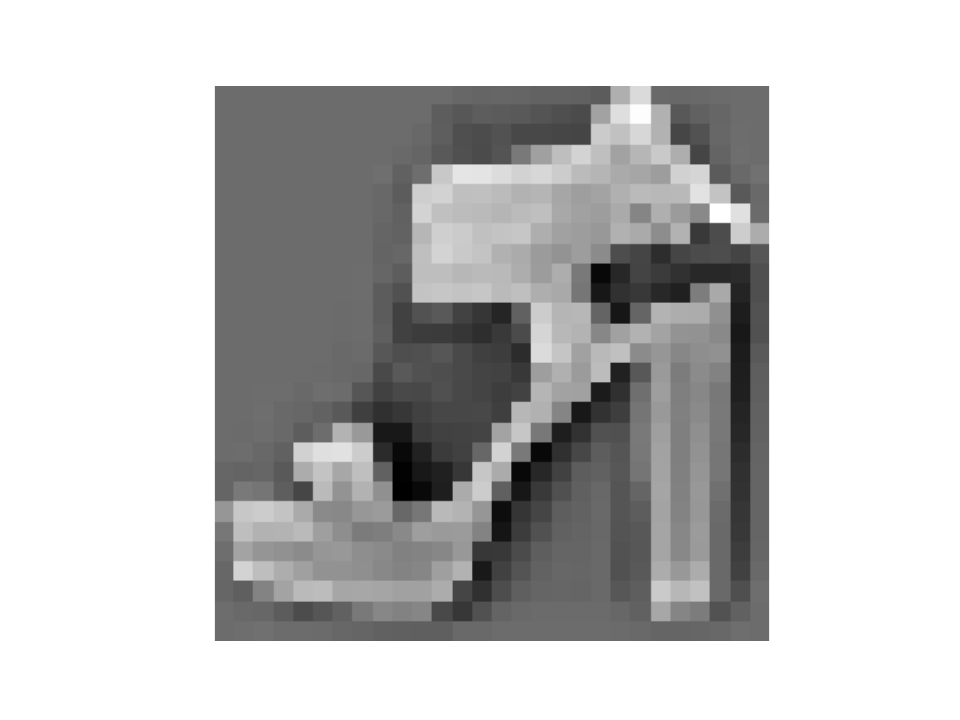}%
        }\\
        \subfloat[Deepest outlier]{%
            \includegraphics[width=.24\linewidth, trim={2cm 1.5 2cm 1.5cm},clip]{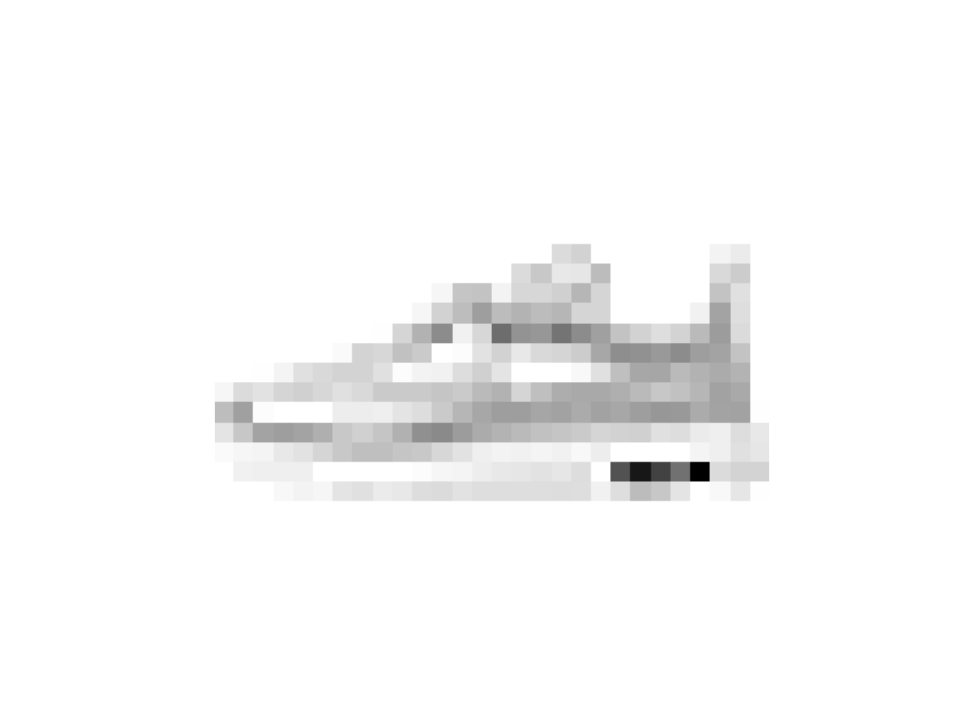}%
             \includegraphics[width=.24\linewidth, trim={2cm 0.5 2cm 1.2cm},clip]{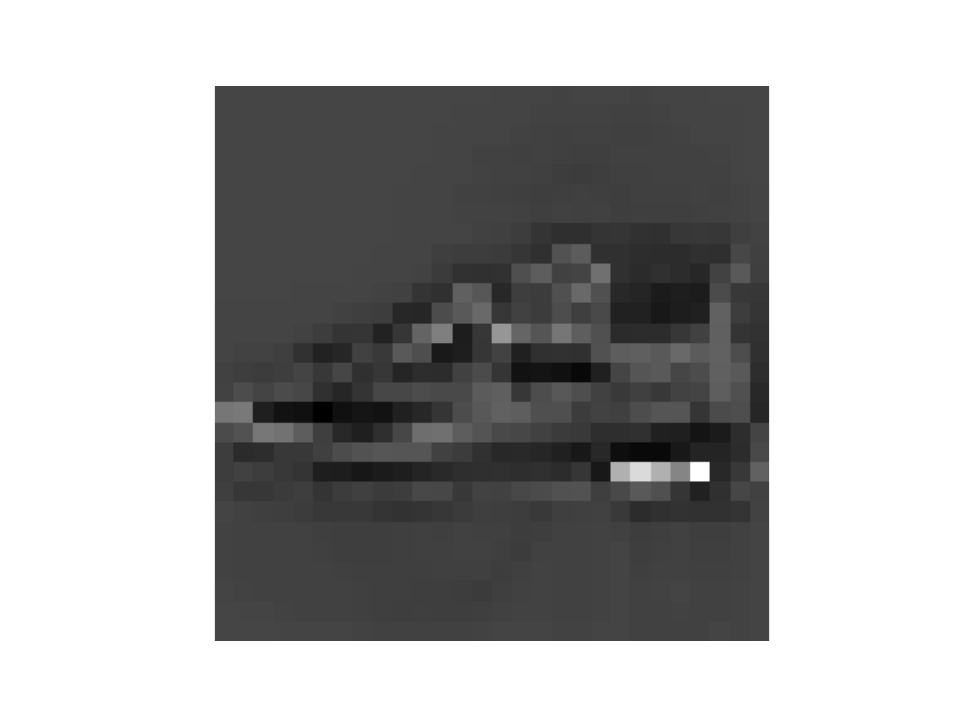}%
        }\hfill
        \subfloat[Least deep outlier]{%
            \includegraphics[width=.24\linewidth, trim={2cm 1.5 2cm 1.5cm},clip]{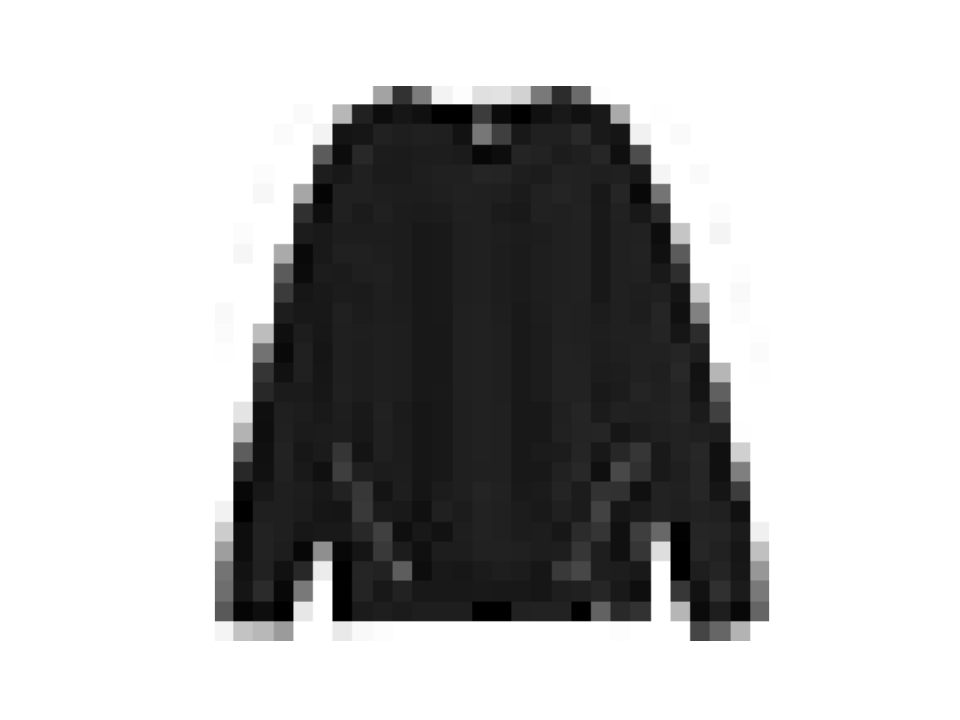}%
            \includegraphics[width=.24\linewidth, trim={2cm 1.5 2cm 1.5cm},clip]{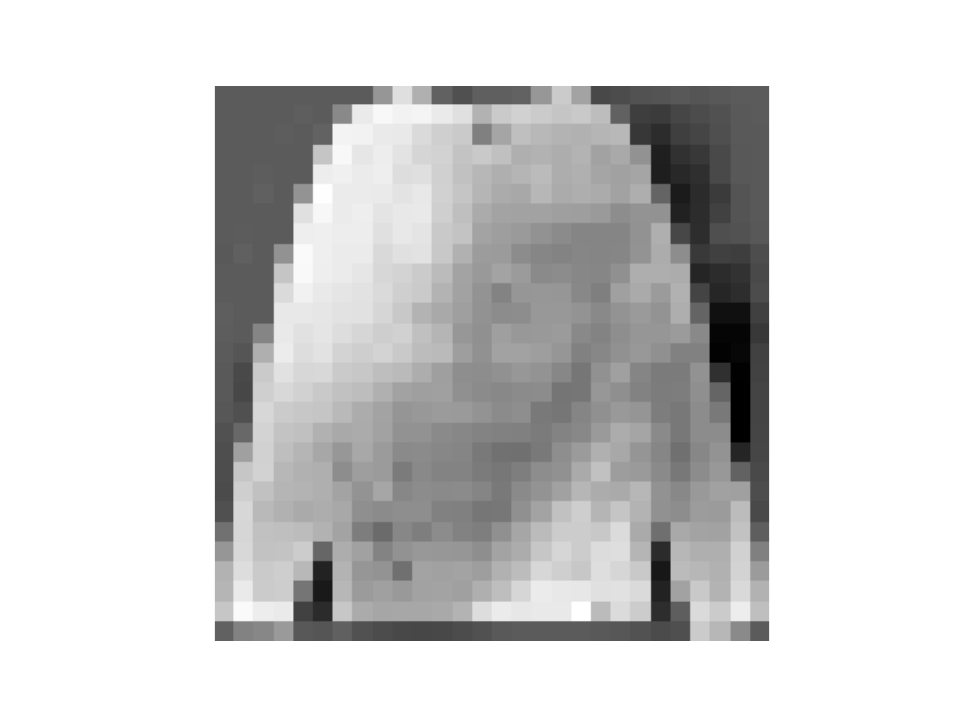}%
        }
        \caption{Outstanding samples for the \texttt{`sandal'} class of Fashion-MNIST on the left side and on the right side the regression coefficients used to compute their Logistic Regression Depth $LR\regD$}
        \label{fig:interpret}
    \end{figure}
\vspace{-0.4cm}
\subsubsection{Outlier Dectection Data Sets (ODDS)}
We carried out anomaly detection on some ODDS datasets~\citep{odds} with the same depths' settings as in subsection~\ref{subsec:illu} (and as in previous section for $LR\regD$). Since Gaussian process classifier is slow for huge datasets (naive complexity is cubic in $n$), we did not run it for the \texttt{thyroid}, \texttt{annthyroid} and \texttt{pendigits} datasets, however it performs well on some smaller datasets. Moreover, we compare to competitors OC-SVM (with same kernel as $SVM\regD$) and Local Outlier Factor (LOF)~\citep{breunig00} with number of neighbours $k \in [5, 10, 15, 20, 30]$.
\begin{table}[ht]
\centering
\caption{AUC-ROC on anomaly detection datasets\\
Best scores in bold, second best in italics.}\label{tab:auroc} 
\begin{tabular}{lrrrrrrr}
\toprule
 & $OCSVM$ &  $LOF$ & $LR\regD$ & $SVM\regD$ & $RFD$ & $MLP\regD$ & $GPD$\\
 \midrule
wine & 0.85  & \textbf{1.00} & 0.89 & 0.97 & 0.63 & 0.33 & \textit{0.99}\\
glass & 0.57 & \textbf{0.87} & 0.55 & 0.81 & 0.56 & 0.72 & \textit{0.86}\\
vowels & 0.69  & \textbf{0.95} & 0.68 & 0.78 & 0.66 & 0.88 & \textit{0.90} \\
pima & 0.59  & 0.59 & \textit{0.61} & \textbf{0.67} & 0.58 & 0.56 & \textit{0.61} \\
breastw & 0.41  & 0.46 & \textit{0.97} & \textbf{0.99} & \textit{0.97} & 0.95 & \textit{0.97} \\
lympho & 0.92  & \textit{0.98} & \textit{0.98} & 0.87 & \textbf{0.99} & 0.97 & \textbf{0.99}\\
thyroid & 0.91 & 0.89 & 0.83 & \textbf{0.99} & \textit{0.97} & 0.70 & N/A\\
annthyroid & 0.66 & 0.74 & 0.54 & \textbf{0.89} & \textit{0.77} & 0.50 & N/A \\
pendigits & 0.86 & 0.54 & \textbf{0.93} & 0.83 & \textit{0.91} & \textbf{0.93} & N/A\\
\bottomrule
\end{tabular}
\end{table}

\section{Conclusion}
\paragraph{Discussion}
We introduce a general framework to convert machine learning algorithms into depth functions, inheriting optimisation and statistical properties.
Loss depths beg the fundamental question: what is the complexity of the task? 
Often the focus is on getting the best scores 
inciting to use the best architecture, using deeper and deeper networks. Recent works have tried to compress the size of the neural networks, with applications to edge device~\citep{compression23}. Here we show that actually using classifiers which are precisely borderline to classify the dataset allows to characterise well the distribution of such dataset. This may be connected to the idea of growth function~\citep{vapnik68}. The distribution of the data depth values (not too high, not too uniformly low) could be a way to decide which size of classifiers describes appropriately some dataset.
\paragraph{Limitations}
In order to achieve robustness, data depths run some optimisation for each point, so on huge dataset, one-class novelty detection methods learning on a training set should be faster at test time. However, 
the data depths of each point can also be computed in parallel thanks to their unsupervised independence. Since the distribution of positively-labelled data is the same in all depth computations, pre-training could also be considered in future works.
Loss depths can be relevant for some real-life scenarios having only one new datum 
per day, such as weather or financial forecasts. Another caveat is that since the negative examples consist of exactly one point, the sets of classifier should take into account its mass, 
unlike decision trees. It should also be discriminative (model $p(\mathrm{label}|\mathrm{data})$) rather than generative (model $p(\mathrm{data}|\mathrm{label})$).
\paragraph{Outlook}
The loss of the logistic regression can be seen as the cross-entropy applied to the Bayesian posterior, and it would be interesting to interpret the depth from an information or Bayesian viewpoint.
One could get a bound on the regularised $LR\regD$ using some results of~\citet{marteau-ferey19a} on the convergence of the regularised minimiser under some condition on $\lambda$. Is it extendable without a bound on $\lambda$?
Analysis of other classifiers' convergence rates, more robust classifiers, as well as the use of neural networks with higher number of layers on harder datasets is left open for future works. 
\bibliographystyle{plainnat}
\bibliography{main}

\begin{thebibliography}{47}
\providecommand{\natexlab}[1]{#1}
\providecommand{\url}[1]{\texttt{#1}}
\expandafter\ifx\csname urlstyle\endcsname\relax
  \providecommand{\doi}[1]{doi: #1}\else
  \providecommand{\doi}{doi: \begingroup \urlstyle{rm}\Url}\fi

\bibitem[Abati et~al.(2018)Abati, Porrello, Calderara, and Cucchiara]{AND}
Davide Abati, Angelo Porrello, Simone Calderara, and Rita Cucchiara.
\newblock {AND:} autoregressive novelty detectors.
\newblock \emph{CoRR}, abs/1807.01653, 2018.
\newblock URL \url{http://arxiv.org/abs/1807.01653}.

\bibitem[Aronszajn(1950)]{aronszajn1950theory}
Nachman Aronszajn.
\newblock Theory of reproducing kernels.
\newblock \emph{Transactions of the American mathematical society}, 68\penalty0
  (3):\penalty0 337--404, 1950.

\bibitem[Bach(2024)]{bach2024learning}
Francis Bach.
\newblock \emph{Learning theory from first principles}.
\newblock MIT press, 2024.

\bibitem[Bartlett and Mendelson(2002)]{BartlettM02}
Peter~L. Bartlett and Shahar Mendelson.
\newblock Rademacher and gaussian complexities: Risk bounds and structural
  results.
\newblock \emph{J. Mach. Learn. Res.}, 3:\penalty0 463--482, 2002.
\newblock URL \url{https://jmlr.org/papers/v3/bartlett02a.html}.

\bibitem[Bergman and Hoshen(2020)]{BergmanH20}
Liron Bergman and Yedid Hoshen.
\newblock Classification-based anomaly detection for general data.
\newblock In \emph{8th International Conference on Learning Representations,
  {ICLR} 2020, Addis Ababa, Ethiopia, April 26-30, 2020}. OpenReview.net, 2020.
\newblock URL \url{https://openreview.net/forum?id=H1lK\_lBtvS}.

\bibitem[Berlinet and Thomas-Agnan(2011)]{berlinet2011reproducing}
Alain Berlinet and Christine Thomas-Agnan.
\newblock \emph{Reproducing kernel Hilbert spaces in probability and
  statistics}.
\newblock Springer Science \& Business Media, 2011.

\bibitem[Bishop(2007)]{Bishop07}
Christopher~M. Bishop.
\newblock \emph{Pattern recognition and machine learning, 5th Edition}.
\newblock Information science and statistics. Springer, 2007.
\newblock ISBN 9780387310732.
\newblock URL \url{https://www.worldcat.org/oclc/71008143}.

\bibitem[Blaser and Fryzlewicz(2016)]{JMLR:v17:blaser16a}
Rico Blaser and Piotr Fryzlewicz.
\newblock Random rotation ensembles.
\newblock \emph{Journal of Machine Learning Research}, 17\penalty0
  (4):\penalty0 1--26, 2016.
\newblock URL \url{http://jmlr.org/papers/v17/blaser16a.html}.

\bibitem[Boyd and Vandenberghe(2004)]{Boyd_Vandenberghe_2004}
Stephen Boyd and Lieven Vandenberghe.
\newblock \emph{Convex Optimization}.
\newblock Cambridge University Press, 2004.

\bibitem[Breunig et~al.(2000)Breunig, Kriegel, Ng, and Sander]{breunig00}
Markus~M. Breunig, Hans-Peter Kriegel, Raymond~T. Ng, and J\"{o}rg Sander.
\newblock Lof: identifying density-based local outliers.
\newblock In \emph{Proceedings of the 2000 ACM SIGMOD International Conference
  on Management of Data}, SIGMOD '00, page 93–104, New York, NY, USA, 2000.
  Association for Computing Machinery.
\newblock ISBN 1581132174.
\newblock \doi{10.1145/342009.335388}.
\newblock URL \url{https://doi.org/10.1145/342009.335388}.

\bibitem[Castellanos et~al.(2023)Castellanos, Mozharovskyi, d'Alch{\'e} Buc,
  and Janati]{castellanos2023fast}
Arturo Castellanos, Pavlo Mozharovskyi, Florence d'Alch{\'e} Buc, and Hicham
  Janati.
\newblock Fast kernel half-space depth for data with non-convex supports.
\newblock \emph{arXiv preprint arXiv:2312.14136}, 2023.

\bibitem[Colombo et~al.(2022)Colombo, Dadalto, Staerman, Noiry, and
  Piantanida]{Colombo2022}
Pierre Colombo, Eduardo Dadalto, Guillaume Staerman, Nathan Noiry, and Pablo
  Piantanida.
\newblock Beyond mahalanobis distance for textual ood detection.
\newblock In S.~Koyejo, S.~Mohamed, A.~Agarwal, D.~Belgrave, K.~Cho, and A.~Oh,
  editors, \emph{Advances in Neural Information Processing Systems}, volume~35,
  pages 17744--17759. Curran Associates, Inc., 2022.
\newblock URL
  \url{https://proceedings.neurips.cc/paper_files/paper/2022/file/70fa5df8e3300dc30bf19bee44a56155-Paper-Conference.pdf}.

\bibitem[Garrigos and Gower(2023)]{garrigos2023handbook}
Guillaume Garrigos and Robert~M Gower.
\newblock Handbook of convergence theorems for (stochastic) gradient methods.
\newblock \emph{arXiv preprint arXiv:2301.11235}, 2023.

\bibitem[Golan and El-Yaniv(2018)]{golan18}
Izhak Golan and Ran El-Yaniv.
\newblock Deep anomaly detection using geometric transformations.
\newblock In S.~Bengio, H.~Wallach, H.~Larochelle, K.~Grauman, N.~Cesa-Bianchi,
  and R.~Garnett, editors, \emph{Advances in Neural Information Processing
  Systems}, volume~31. Curran Associates, Inc., 2018.
\newblock URL
  \url{https://proceedings.neurips.cc/paper_files/paper/2018/file/5e62d03aec0d17facfc5355dd90d441c-Paper.pdf}.

\bibitem[Gretton et~al.(2012)Gretton, Borgwardt, Rasch, Sch{{\"o}}lkopf, and
  Smola]{gretton12}
Arthur Gretton, Karsten~M. Borgwardt, Malte~J. Rasch, Bernhard Sch{{\"o}}lkopf,
  and Alexander Smola.
\newblock A kernel two-sample test.
\newblock \emph{Journal of Machine Learning Research}, 13\penalty0
  (25):\penalty0 723--773, 2012.
\newblock URL \url{http://jmlr.org/papers/v13/gretton12a.html}.

\bibitem[Gu et~al.(2025)Gu, Song, and Zhang]{Gu25}
Yuzhou Gu, Zhao Song, and Lichen Zhang.
\newblock Faster algorithms for structured linear and kernel support vector
  machines.
\newblock In \emph{The Thirteenth International Conference on Learning
  Representations, {ICLR} 2025, Singapore, April 24-28, 2025}. OpenReview.net,
  2025.
\newblock URL \url{https://openreview.net/forum?id=DDNFTaVQdU}.

\bibitem[Hadsell et~al.(2006)Hadsell, Chopra, and LeCun]{DAE}
R.~Hadsell, S.~Chopra, and Y.~LeCun.
\newblock Dimensionality reduction by learning an invariant mapping.
\newblock In \emph{2006 IEEE Computer Society Conference on Computer Vision and
  Pattern Recognition (CVPR'06)}, volume~2, pages 1735--1742, 2006.
\newblock \doi{10.1109/CVPR.2006.100}.

\bibitem[Kakade et~al.(2008)Kakade, Sridharan, and
  Tewari]{kakade2008complexity}
Sham~M Kakade, Karthik Sridharan, and Ambuj Tewari.
\newblock On the complexity of linear prediction: Risk bounds, margin bounds,
  and regularization.
\newblock \emph{Advances in neural information processing systems}, 21, 2008.

\bibitem[Kendall(1938)]{kendall1938new}
Maurice~G Kendall.
\newblock A new measure of rank correlation.
\newblock \emph{Biometrika}, 30\penalty0 (1-2):\penalty0 81--93, 1938.

\bibitem[Kingma and Welling(2014)]{VAE}
Diederik~P. Kingma and Max Welling.
\newblock Auto-encoding variational bayes.
\newblock In Yoshua Bengio and Yann LeCun, editors, \emph{2nd International
  Conference on Learning Representations, {ICLR} 2014, Banff, AB, Canada, April
  14-16, 2014, Conference Track Proceedings}, 2014.
\newblock URL \url{http://arxiv.org/abs/1312.6114}.

\bibitem[LeCun and Cortes()]{mnist}
Yann LeCun and Corinna Cortes.
\newblock Mnist handwritten digit database.

\bibitem[Li et~al.(2023)Li, Li, and Meng]{compression23}
Zhuo Li, Hengyi Li, and Lin Meng.
\newblock Model compression for deep neural networks: A survey.
\newblock \emph{Computers}, 12\penalty0 (3), 2023.
\newblock ISSN 2073-431X.
\newblock \doi{10.3390/computers12030060}.
\newblock URL \url{https://www.mdpi.com/2073-431X/12/3/60}.

\bibitem[Liu and Grigas(2021)]{Liu21}
Heyuan Liu and Paul Grigas.
\newblock Risk bounds and calibration for a smart predict-then-optimize method.
\newblock In \emph{Proceedings of the 35th International Conference on Neural
  Information Processing Systems}, NIPS '21, Red Hook, NY, USA, 2021. Curran
  Associates Inc.
\newblock ISBN 9781713845393.

\bibitem[Liu and and(1993)]{Liu93}
Regina~Y. Liu and Kesar~Singh and.
\newblock A quality index based on data depth and multivariate rank tests.
\newblock \emph{Journal of the American Statistical Association}, 88\penalty0
  (421):\penalty0 252--260, 1993.
\newblock \doi{10.1080/01621459.1993.10594317}.
\newblock URL \url{https://doi.org/10.1080/01621459.1993.10594317}.

\bibitem[Marteau{-}Ferey et~al.(2019)Marteau{-}Ferey, Ostrovskii, Bach, and
  Rudi]{marteau-ferey19a}
Ulysse Marteau{-}Ferey, Dmitrii Ostrovskii, Francis Bach, and Alessandro Rudi.
\newblock Beyond least-squares: Fast rates for regularized empirical risk
  minimization through self-concordance.
\newblock In Alina Beygelzimer and Daniel Hsu, editors, \emph{Proceedings of
  the Thirty-Second Conference on Learning Theory}, volume~99 of
  \emph{Proceedings of Machine Learning Research}, pages 2294--2340. PMLR,
  25--28 Jun 2019.
\newblock URL \url{https://proceedings.mlr.press/v99/marteau-ferey19a.html}.

\bibitem[Maurer(2016)]{Maurer16}
Andreas Maurer.
\newblock A vector-contraction inequality for rademacher complexities.
\newblock In Ronald Ortner, Hans~Ulrich Simon, and Sandra Zilles, editors,
  \emph{Algorithmic Learning Theory - 27th International Conference, {ALT}
  2016, Bari, Italy, October 19-21, 2016, Proceedings}, volume 9925 of
  \emph{Lecture Notes in Computer Science}, pages 3--17, 2016.
\newblock \doi{10.1007/978-3-319-46379-7\_1}.
\newblock URL \url{https://doi.org/10.1007/978-3-319-46379-7\_1}.

\bibitem[Mozharovskyi and Valla(2022)]{mozharovskyi2022anomaly}
Pavlo Mozharovskyi and Romain Valla.
\newblock Anomaly detection using data depth: multivariate case.
\newblock \emph{arXiv preprint arXiv:2210.02851}, 2022.

\bibitem[Mozharovskyi et~al.(2020)Mozharovskyi, Josse, and
  and]{Mozharovskyi20imput}
Pavlo Mozharovskyi, Julie Josse, and François~Husson and.
\newblock Nonparametric imputation by data depth.
\newblock \emph{Journal of the American Statistical Association}, 115\penalty0
  (529):\penalty0 241--253, 2020.
\newblock \doi{10.1080/01621459.2018.1543123}.
\newblock URL \url{https://doi.org/10.1080/01621459.2018.1543123}.

\bibitem[Nguyen et~al.(2024)Nguyen, Gamboa, Chhaibi, Zhang, Gratton, and
  Giaccone]{gamboa24}
Hai-Vy Nguyen, Fabrice Gamboa, Reda Chhaibi, Sixin Zhang, Serge Gratton, and
  Thierry Giaccone.
\newblock Combining statistical depth and fermat distance for uncertainty
  quantification.
\newblock In A.~Globerson, L.~Mackey, D.~Belgrave, A.~Fan, U.~Paquet,
  J.~Tomczak, and C.~Zhang, editors, \emph{Advances in Neural Information
  Processing Systems}, volume~37, pages 30207--30229. Curran Associates, Inc.,
  2024.
\newblock URL
  \url{https://proceedings.neurips.cc/paper_files/paper/2024/file/356e52580f8d14514eb5e7d2fa1696a0-Paper-Conference.pdf}.

\bibitem[Pedregosa et~al.(2011)Pedregosa, Varoquaux, Gramfort, Michel, Thirion,
  Grisel, Blondel, Prettenhofer, Weiss, Dubourg, Vanderplas, Passos,
  Cournapeau, Brucher, Perrot, and Duchesnay]{scikit-learn}
F.~Pedregosa, G.~Varoquaux, A.~Gramfort, V.~Michel, B.~Thirion, O.~Grisel,
  M.~Blondel, P.~Prettenhofer, R.~Weiss, V.~Dubourg, J.~Vanderplas, A.~Passos,
  D.~Cournapeau, M.~Brucher, M.~Perrot, and E.~Duchesnay.
\newblock Scikit-learn: Machine learning in {P}ython.
\newblock \emph{Journal of Machine Learning Research}, 12:\penalty0 2825--2830,
  2011.

\bibitem[Perera et~al.(2019)Perera, Nallapati, and Xiang]{ocgan}
Pramuditha Perera, Ramesh Nallapati, and Bing Xiang.
\newblock Ocgan: One-class novelty detection using gans with constrained latent
  representations.
\newblock In \emph{2019 IEEE/CVF Conference on Computer Vision and Pattern
  Recognition (CVPR)}, pages 2893--2901, 2019.
\newblock \doi{10.1109/CVPR.2019.00301}.

\bibitem[Picot et~al.(2023)Picot, Granese, Staerman, Romanelli, Messina,
  Piantanida, and Colombo]{picot2023a}
Marine Picot, Federica Granese, Guillaume Staerman, Marco Romanelli, Francisco
  Messina, Pablo Piantanida, and Pierre Colombo.
\newblock A halfspace-mass depth-based method for adversarial attack detection.
\newblock \emph{Transactions on Machine Learning Research}, 2023.
\newblock ISSN 2835-8856.
\newblock URL \url{https://openreview.net/forum?id=YtU0nDb5e8}.

\bibitem[Rayana(2016)]{odds}
Shebuti Rayana.
\newblock Outlier detection datasets (odds) library, 2016.
\newblock URL \url{https://odds.cs.stonybrook.edu}.

\bibitem[Reid and Williamson(2011)]{reid11}
Mark~D. Reid and Robert~C. Williamson.
\newblock Information, divergence and risk for binary experiments.
\newblock \emph{Journal of Machine Learning Research}, 12\penalty0
  (22):\penalty0 731--817, 2011.
\newblock URL \url{http://jmlr.org/papers/v12/reid11a.html}.

\bibitem[Ruff et~al.(2018)Ruff, Vandermeulen, Goernitz, Deecke, Siddiqui,
  Binder, M{\"u}ller, and Kloft]{ruff18a}
Lukas Ruff, Robert Vandermeulen, Nico Goernitz, Lucas Deecke, Shoaib~Ahmed
  Siddiqui, Alexander Binder, Emmanuel M{\"u}ller, and Marius Kloft.
\newblock Deep one-class classification.
\newblock In Jennifer Dy and Andreas Krause, editors, \emph{Proceedings of the
  35th International Conference on Machine Learning}, volume~80 of
  \emph{Proceedings of Machine Learning Research}, pages 4393--4402. PMLR,
  10--15 Jul 2018.
\newblock URL \url{https://proceedings.mlr.press/v80/ruff18a.html}.

\bibitem[Schlegl et~al.(2017)Schlegl, Seeb{\"{o}}ck, Waldstein,
  Schmidt{-}Erfurth, and Langs]{IPMI}
Thomas Schlegl, Philipp Seeb{\"{o}}ck, Sebastian~M. Waldstein, Ursula
  Schmidt{-}Erfurth, and Georg Langs.
\newblock Unsupervised anomaly detection with generative adversarial networks
  to guide marker discovery.
\newblock In Marc Niethammer, Martin Styner, Stephen~R. Aylward, Hongtu Zhu,
  Ipek Oguz, Pew{-}Thian Yap, and Dinggang Shen, editors, \emph{Information
  Processing in Medical Imaging - 25th International Conference, {IPMI} 2017,
  Boone, NC, USA, June 25-30, 2017, Proceedings}, volume 10265 of \emph{Lecture
  Notes in Computer Science}, pages 146--157. Springer, 2017.
\newblock \doi{10.1007/978-3-319-59050-9\_12}.
\newblock URL \url{https://doi.org/10.1007/978-3-319-59050-9\_12}.

\bibitem[Schölkopf et~al.(2001)Schölkopf, Platt, Shawe-Taylor, Smola, and
  Williamson]{ocsvm}
Bernhard Schölkopf, John~C. Platt, John Shawe-Taylor, Alex~J. Smola, and
  Robert~C. Williamson.
\newblock Estimating the support of a high-dimensional distribution.
\newblock \emph{Neural Computation}, 13\penalty0 (7):\penalty0 1443--1471,
  2001.
\newblock \doi{10.1162/089976601750264965}.

\bibitem[Spearman(1904)]{spearman1987proof}
C.~Spearman.
\newblock The proof and measurement of association between two things.
\newblock \emph{The American Journal of Psychology}, 15\penalty0 (1):\penalty0
  72--101, 1904.
\newblock ISSN 00029556.
\newblock URL \url{http://www.jstor.org/stable/1412159}.

\bibitem[Steinwart and Christmann(2008)]{SteinwartChristmann2008}
Ingo Steinwart and Andreas Christmann.
\newblock \emph{Support Vector Machines}.
\newblock Information science and statistics. Springer, 2008.
\newblock ISBN 978-0-387-77241-7.

\bibitem[Tukey(1975)]{tukey1975mathematics}
John~W Tukey.
\newblock Mathematics and the picturing of data.
\newblock In \emph{Proceedings of the International Congress of Mathematicians,
  Vancouver, 1975}, volume~2, pages 523--531, 1975.

\bibitem[van~den Oord et~al.(2016)van~den Oord, Kalchbrenner, Espeholt,
  kavukcuoglu, Vinyals, and Graves]{PixCNN}
Aaron van~den Oord, Nal Kalchbrenner, Lasse Espeholt, koray kavukcuoglu, Oriol
  Vinyals, and Alex Graves.
\newblock Conditional image generation with pixelcnn decoders.
\newblock In D.~Lee, M.~Sugiyama, U.~Luxburg, I.~Guyon, and R.~Garnett,
  editors, \emph{Advances in Neural Information Processing Systems}, volume~29.
  Curran Associates, Inc., 2016.
\newblock URL
  \url{https://proceedings.neurips.cc/paper_files/paper/2016/file/b1301141feffabac455e1f90a7de2054-Paper.pdf}.

\bibitem[Vapnik and Chervonenkis(1968)]{vapnik68}
V.~N. Vapnik and A.~Ya. Chervonenkis.
\newblock The uniform convergence of frequencies of the appearance of events to
  their probabilities.
\newblock \emph{Dokl. Akad. Nauk SSSR}, 181:\penalty0 781, 1968.

\bibitem[Villani(2009)]{villani2009optimal}
C{\'e}dric Villani.
\newblock \emph{{Optimal transport: Old and New}}, volume 338.
\newblock Springer, 2009.

\bibitem[Wu et~al.(2021)Wu, Lu, Fuh, and Liu]{Wu21}
Jhih-Ciang Wu, Sherman Lu, Chiou-Shann Fuh, and Tyng-Luh Liu.
\newblock One-class anomaly detection via novelty normalization.
\newblock \emph{Computer Vision and Image Understanding}, 210:\penalty0 103226,
  2021.
\newblock ISSN 1077-3142.
\newblock \doi{https://doi.org/10.1016/j.cviu.2021.103226}.
\newblock URL
  \url{https://www.sciencedirect.com/science/article/pii/S1077314221000709}.

\bibitem[Xiao et~al.(2017)Xiao, Rasul, and Vollgraf]{xiao2017/online}
Han Xiao, Kashif Rasul, and Roland Vollgraf.
\newblock Fashion-mnist: a novel image dataset for benchmarking machine
  learning algorithms, 2017.

\bibitem[Zhai et~al.(2016)Zhai, Cheng, Lu, and Zhang]{zhai16}
Shuangfei Zhai, Yu~Cheng, Weining Lu, and Zhongfei Zhang.
\newblock Deep structured energy based models for anomaly detection.
\newblock In Maria~Florina Balcan and Kilian~Q. Weinberger, editors,
  \emph{Proceedings of The 33rd International Conference on Machine Learning},
  volume~48 of \emph{Proceedings of Machine Learning Research}, pages
  1100--1109, New York, New York, USA, 20--22 Jun 2016. PMLR.
\newblock URL \url{https://proceedings.mlr.press/v48/zhai16.html}.

\bibitem[Zong et~al.(2018)Zong, Song, Min, Cheng, Lumezanu, Cho, and
  Chen]{Zong18}
Bo~Zong, Qi~Song, Martin~Renqiang Min, Wei Cheng, Cristian Lumezanu, Dae{-}ki
  Cho, and Haifeng Chen.
\newblock Deep autoencoding gaussian mixture model for unsupervised anomaly
  detection.
\newblock In \emph{6th International Conference on Learning Representations,
  {ICLR} 2018, Vancouver, BC, Canada, April 30 - May 3, 2018, Conference Track
  Proceedings}. OpenReview.net, 2018.
\newblock URL \url{https://openreview.net/forum?id=BJJLHbb0-}.

\end{thebibliography}

\newpage
\appendix

\section{Proofs}\label{appendsec:proofs}

\subsection{Background on Reproducing Kernel Hilbert Space}\label{appendsec:RKHS}

    Kernel methods allow to tackle nonlinear relationship between data by applying linear tools to non-linear features of the data. They rely on the key notion of Reproducing Kernel Hilbert Space \citep{aronszajn1950theory, berlinet2011reproducing} and the famous kernel trick. For a positive definite symmetric kernel $k: \cX \times \cX \to \bbR$ defined over some 
    non-empty set $\cX$, Aronszajn's theorem says that there is a unique Hilbert space $\cH$ equipped with an inner product $\langle \cdot, \cdot \rangle_{\cH}$, for which $k(\cdot, x) \in \cH$ and $k$ is its reproducing kernel, that is:
   \begin{equation*}
       \forall f \in \mathcal{H},\forall x \in X, \langle f, k(\cdot,x)\rangle_\mathcal{H} = f(x).
   \end{equation*}
   It admits a canonical feature map $\varphi: \cX \to \cH$ defined by: $\varphi(x):=k(\cdot,x) \in \cH$.
   This defines what is called a kernel inner product on the space $X$:
       \begin{equation*}
           k(x,y) = \langle \varphi(x), \varphi(y) \rangle_\mathcal{H}
       \end{equation*}
    In practice, an explicit formula often allows to compute the kernel inner product without going through the (potentially) infinite dimensional RKHS - this is called the {\it kernel trick}.
    The dual of the Euclidian space $\bbR^d$ consisting of 
    $\big(\bbR^d\big)^*=
    \{w^*: \bbR^d \to \bbR, x \mapsto \langle w,x\rangle | w\in \bbR^d\}$ is a trivial RKHS for $\cX=\bbR^d$ defined such that $\varphi : w \mapsto w^*$ and $\langle w^* , v^*\rangle_\cH = \langle w^* , \varphi(v) \rangle_\cH =  w^*(v)=\langle w , v\rangle$.
    Some well-known bounded kernels among others are:
    \begin{itemize}
        \item the Gaussian kernel: $k(x,y)=e^{-\gamma||x-y||^2}$, $\gamma>0$
        \item the Laplacian kernel: $k(x,y)=e^{-\frac{||x-y||_1}{\sigma}}$, $\sigma>0$
        \item the inverse multi-quadric (IMQ) kernel: $k(x, y) = (c^2+||x-y||^2)^\beta$ for some $\beta < 0$ and $c > 0$.
    \end{itemize}


\subsection{Optimisation}\label{appendsec:optim}

This background's material can be found in textbooks such as, e.g., Chapter 5 of \citet{bach2024learning} or in~\citet{garrigos2023handbook}:

\begin{definition}
    A function $F: \bbR^d \to \bbR \cup \{\infty\}$ is said to be $\mu$-strongly convex, (with $\mu>0$), if and only if:
        \begin{equation*}
            \forall x,y \in \bbR^d, \forall t \in [0,1] \in tF(x) + (1-t) F(y) \geq F(tx+(1-t)y) + \mu \frac{t(1-t)}{2} ||x-y||^2
        \end{equation*}
\end{definition}

In particular, by Lemma 2.12 of~\citet{garrigos2023handbook}, it is equivalent to $F$ being decomposable as $F(x) = l(x) + \frac{\mu}{2} ||x||^2$ for all $x$ where $l$ is a convex function.
Therefore the regularised loss $F_\lambda$ of $LR\regD$ is $2\lambda$-strongly convex by convexity of $l_{log}$.

\begin{definition}
     A function $F: \bbR^d \to \bbR$ is $L$-smooth (for $L>0$) if it is differentiable and its gradient is $L$-Lipschitz.
\end{definition}

If $F$ is twice differentiable, a sufficient condition is to bound the norm of the Hessian $||\nabla^2 F ||$ by $L$.
The interesting case is when a function is both $\mu$-strongly convex and $L$-smooth (then according to Remark 2.27 of \citet{garrigos2023handbook} we necessarily have $\mu\leq L$).
In such case, Theorem 3.6 and its Corollary 3.8 of \citet{garrigos2023handbook} guarantee that the Gradient descent algorithm applied in $t$ steps with stepsize $0<\gamma\leq\frac{1}{L}$ starting from some point $x^{(0)}$ can produce some $x^{(t)}$ as close to the minimiser $x^*$ as:
\begin{equation}
    ||x^{(t)}-x^*||^2 \leq  (1-\gamma\mu)^t||x^{(0)}-x^*||^2
\end{equation}

In particular if $\gamma = \frac{1}{L} \leq \frac{1}{\mu}$, one can get an $\varepsilon$-approximated solution in $O(\log(\frac{1}{\varepsilon}))$ iterations.
This can be use to show optimisation convergence guarantee for $LR\regD$.

\begin{proposition}
    Suppose distribution $Q$ has bounded support (Assumption~\ref{assum:B}).
    When $F_\lambda$ is the regularised logistic loss, then the empirical objective $\Gamma_\lambda : w \mapsto \bbE_{(x,y)\sim \LaDis{Q_n}{z}} l_{log}(\langle w, x \rangle, y) + \lambda ||w||^2$ is $2\lambda$-strongly convex and $L$-smooth, with $L\geq 2\lambda$. 
    Therefore one can find in time $O(nd\log(\frac{1}{\varepsilon}))$ some $O(\varepsilon)$-approximation 
    of the value of $LRD(z|Q_n)$.
\end{proposition}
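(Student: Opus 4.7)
The plan is to verify the two structural properties of the empirical objective $\Gamma_\lambda$ separately, and then plug them into the linear-convergence bound for gradient descent recalled in Appendix~\ref{appendsec:optim}.

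For strong convexity, I would note that for each fixed $(x,y)$ the map $w \mapsto l_{log}(\langle w,x\rangle,y) = \log(1+e^{-y\langle w,x\rangle})$ is convex as a composition of the convex softplus with a linear map, so its empirical expectation over $\LaDis{Q_n}{z}$ is convex in $w$. Writing $\Gamma_\lambda$ as this convex function plus $\tfrac{2\lambda}{2}||w||^2$ and invoking the decomposition characterisation (Lemma~2.12 of~\citet{garrigos2023handbook}) recalled in the appendix gives $2\lambda$-strong convexity directly.

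For $L$-smoothness, I would compute the per-sample Hessian of $w \mapsto l_{log}(\langle w, x\rangle, y)$, which equals $\sigma(y\langle w,x\rangle)(1-\sigma(y\langle w,x\rangle))\, xx^\top$ using $y^2=1$ and the sigmoid identity $\sigma'(u)=\sigma(u)(1-\sigma(u))$. The variance-like prefactor is bounded by $1/4$ uniformly in $w$, and under Assumption~\ref{assum:B} we have $||x||^2\leq B^2$, so the per-sample Hessian is dominated by $(B^2/4)I$ in the positive semidefinite order. Averaging over $\LaDis{Q_n}{z}$ and adding the regulariser Hessian $2\lambda I$ yields $2\lambda I \preceq \nabla^2\Gamma_\lambda(w) \preceq (B^2/4 + 2\lambda)I$, so $\Gamma_\lambda$ is $L$-smooth with $L = B^2/4 + 2\lambda$, which also confirms $L\geq 2\lambda$ in line with Remark~2.27 of the same reference.

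With both constants at hand, Theorem~3.6 / Corollary~3.8 of~\citet{garrigos2023handbook} applied to gradient descent with step size $1/L$ yields $||w^{(t)}-w^*||^2 \leq (1-2\lambda/L)^t||w^{(0)}-w^*||^2$, where the contraction factor depends only on $B$ and $\lambda$ (not on $n$ or $d$); initialising at $w^{(0)}=0$ bounds the prefactor by $||w^*||^2$, itself controlled by the argument $||w^*||\leq \sqrt{D/\lambda}$ with $D=\log 2$ used in the earlier boundedness proposition. Hence $t = O(\log(1/\varepsilon))$ iterations suffice to drive $||w^{(t)}-w^*||$ below $\varepsilon$, and each iteration computes a sum of $n$ logistic-gradient terms in $\bbR^d$, costing $O(nd)$. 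The only step that really needs care — and what I view as the main obstacle — is converting parameter accuracy into accuracy on the depth value itself, since $LR\regD(z|Q_n)$ equals the \emph{unregularised} expected loss $\bbE_{(x,y)\sim\LaDis{Q_n}{z}} l_{log}(\langle w^*,x\rangle,y)$ rather than $\Gamma_\lambda(w^*)$. For this I would use that $l_{log}$ is $1$-Lipschitz in its first argument: combining this with Cauchy--Schwarz and Assumption~\ref{assum:B}, the map $w \mapsto \bbE_{(x,y)\sim\LaDis{Q_n}{z}} l_{log}(\langle w,x\rangle,y)$ is $\tfrac{1}{2}(B+||z||)$-Lipschitz, so rescaling the target parameter accuracy by this constant (independent of $n$ and $d$) still gives the claimed $O(nd\log(1/\varepsilon))$ time for an $O(\varepsilon)$-approximation of the depth value.
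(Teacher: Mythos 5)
Your proposal is correct and follows essentially the same route as the paper: $2\lambda$-strong convexity via the decomposition lemma, smoothness via the per-sample logistic Hessian $s(1-s)\,xx^\top$ bounded by $\tfrac14$, linear convergence of gradient descent at $O(nd)$ per iteration, and a final Lipschitz argument to convert parameter accuracy into accuracy of the (unregularised) depth value. The only small slip is that your smoothness constant $L=B^2/4+2\lambda$ ignores the half of the mass sitting at $z$, which need not satisfy Assumption~\ref{assum:B}; the correct bound (as in the paper, which keeps a $\tfrac12 zz^\top$ term) also involves $\|z\|^2$, but since this is still a constant for fixed $z$ the $O(nd\log(1/\varepsilon))$ conclusion is unaffected.
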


\begin{proof}
    The $2\lambda$-strongly convexity has already been proven. Notice that the individual loss can be expressed $l_{xy} : w \mapsto l_{log}(\langle w, x \rangle, y) = -\log(s(\langle w,yx\rangle))$ where $s$ is the sigmoid function $s: z \mapsto \frac{1}{1+ e^{-z}}$. Using $s'(x) = s(x)(1-s(x))$ and $(1-s(x))=s(-x)$ deriving gives : 
    \begin{equation*}
        \nabla l_{xy}(w) =  -yx s(-\langle w,yx\rangle)
    \end{equation*}, then deriving a second time gives the Hessian: 
    \begin{equation*}
        \nabla^2 l_{xy}(w) = xx^T s(-\langle w,yx\rangle)(1-s(-\langle w,yx\rangle))
    \end{equation*} using $y^2=1$. Summing everything we can bound the Hessian of the expected regularised loss as it is continuous and we optimise over bounded $w$ and $Q$ has a bounded support.
    In particular, since $\forall x, s(x)(1-s(x) \leq \frac{1}{4} \leq 1$:
    \begin{equation*}
        ||\nabla^2 \Gamma_\lambda(w) || \leq || \frac{1}{2}Cov(Q_n) + \frac{1}{2}zz^T + 2\lambda ||
    \end{equation*}
    which can be taken as a smoothness constant.
    Then we can find an $\varepsilon^2$-approximated solution $\Tilde{w}$ of the minimiser $w^*$ in $O(2 \log(1/\varepsilon))$ such that 
    $||\Tilde{w}-w^*||=O(\varepsilon)$ and by Lipschitzness of the loss with respect to $w$ (thanks to Assumption~\ref{assum:B}), we get the result by plugging $\Tilde{w}$ into the loss. Computation of the gradient cost $O(nd)$ at each iteration of the gradient descent hence the result.
    \qed
\end{proof}


\subsection{Lipschitzness}\label{appendsec:Lip}

Proof of Proposition~\ref{prop:Lip}:

\begin{proof}
    To prove the Lipschitzness with respect to $z$, observe that only the negative part with $l_-$ changes. Considering some $z$ and $z'$ in $\cX$, for all $f\in \cH$:
    \begin{align*}
        |\bbE_{(x,y)\sim \LaDi} l(f(x),y) - \bbE_{(x,y)\sim P_{Q+|z'-}} l(f(x),y)| &= |\frac{1}{2} l_-(f(z)) - \frac{1}{2} l_-(f(z'))| \\
        &\leq \quad \frac{1}{2} L ||f(z)-f(z')|| \\
        &\leq \quad \frac{1}{2} LW ||z-z'||
    \end{align*}
    therefore we deduce that the two infima $D(z|Q)$ and $D(z'|Q)$ also verify:
    \begin{equation*}
        |D(z|Q) -D(z'|Q) | \leq \frac{1}{2} LW ||z-z'||.
    \end{equation*}

    For the Lipschitzness with respect to the distribution, on the contrary only the $l_+$ part changes. For two distributions $Q$ and $Q'$ of $\cP(\cX)$, for all $f\in \cH$:
        \begin{align*}
        |\bbE_{(x,y)\sim P_{Q+|z-}} l(f(x),y) - \bbE_{(x,y)\sim P_{Q'+|z-}} l(f(x),y)| &= |\frac{1}{2} \bbE_{x\sim Q}l_+(f(x)) - \frac{1}{2} \bbE_{x'\sim Q'}l_+(f(x'))|
    \end{align*}
    For any coupling $\pi$ of $Q$ and $Q'$: 
    \begin{align*}
        |\frac{1}{2} \bbE_{x\sim Q}l_+(f(x)) - \frac{1}{2} \bbE_{x'\sim Q'}l_+(f(x'))| &= |\frac{1}{2} \bbE_{(x,x')\sim \pi}[l_+(f(x)) - l_+(f(x'))]|\\
        &\leq \bbE_{(x,x')\sim \pi} \frac{1}{2} L ||f(x)-f(x')|| \\
        &\leq \bbE_{(x,x')\sim \pi} \frac{1}{2} LW ||x-x'||
    \end{align*}
    therefore, we deduce by minimising over the coupling that:
    \begin{equation*}
         \forall f\in \cH,|\bbE_{(x,y)\sim P_{Q+|z-}} l(f(x),y) - \bbE_{(x,y)\sim P_{Q'+|z-}} l(f(x),y)| \leq \frac{1}{2} LW(Q,Q')
    \end{equation*}
    and so
    \begin{equation*}
         |D(z|Q)-D(z|Q')| \leq \frac{1}{2} LW(Q,Q')
    \end{equation*} \qed
\end{proof}


\subsection{Convexity}\label{appendsec:convex}

Proof of Proposition~\ref{prop:convex}

\begin{proof}
    Assume by contradiction that some distinct points $z_1,z_2\in \cX$ have depths bigger than some $\alpha$, and some point $z_t = t z_1 + (1-t) z_2$ for some $t\in(0,1)$ has depth strictly smaller than $\alpha$. Then there must exist some $w^*$ such that:
    \begin{equation}\label{eq:bigineq}
        \lossD(z_t|Q) \leq \bbE_{(x,y)\sim P_{Q+|z_t-}}l(\langle w^*,x\ \rangle,y) < \alpha \leq \lossD(z_1|Q), \lossD(z_2|Q)
    \end{equation}
 (for instance, picking the $w^*$ that gives the infinum of the depth on $z_t$ or a value close to that infinum). But we also must have that, plugging the weight $w^*$ in the expected loss for $z_1$:
 \begin{equation}\label{eq:dz1}
     \lossD(z_1|Q) \leq \bbE_{(x,y)\sim P_{Q+|z_1-}}l(\langle w^*,x\ \rangle,y) 
 \end{equation} and similarly for $z_2$: 
 \begin{equation}\label{eq:dz2}
      \lossD(z_2|Q) \leq \bbE_{(x,y)\sim P_{Q+|z_2-}}l(\langle w^*,x\ \rangle,y).
 \end{equation}
 Assume that $\langle w^*, z_1 - z_2 \rangle = 0$.
 Then $l_-(\langle w^*, z_1\rangle) = l_-(\langle w^*, z_t\rangle) = l_-(\langle w^*, z_2\rangle)$ and so the expected losses using $w^*$ for distribution $P_{Q+|z_1-}, P_{Q+|z_t-}$ and $P_{Q+|z_2-}$ are also equal. But that is not possible, as eq.~\ref{eq:bigineq} states that such value should be strictly lower than $\alpha$ while eq.~\ref{eq:dz1} and \ref{eq:dz2} state that it should be higher or equal. So from now let us assume that $\langle w^*, z_1 - z_2 \rangle$ is not zero, and without loss of generality, assume instead $\langle w^*, z_1 - z_2 \rangle<0$ (the positive case is proven by symmetry). Assume also that $l_-$ is monotonely increasing (the decreasing case also works by symmetry).
 Notice that $z_1 = z_t + (1-t)(z_1-z_2)$ and $z_2 = z_t + t(z_2-z_1)$. Then by monotonicity: 
 \begin{align*}
    l_-(\langle w^*, z_1\rangle) &= l_-(\langle w^*, z_t\rangle + (1-t)\langle w^*, z_1-z_2\rangle) \\
    &\leq l_-(\langle w^*, z_t\rangle)
 \end{align*}
therefore $\bbE_{(x,y)\sim P_{Q+|z_1-}}l(\langle w^*,x\ \rangle,y) \leq \bbE_{(x,y)\sim P_{Q+|z_t-}}l(\langle w^*,x\ \rangle,y) \leq \alpha$, which is a contradiction with eq.~\ref{eq:dz1}. Switching the sign of $\langle w^*, z_1 - z_2 \rangle$ or the monotonicity of $l_-$ switch the role of $z_1$ and $z_2$, but at least one them will have its depth smaller than $\alpha$, hence the reductio ad absurdum. \qed
\end{proof}

The proof also works for the third variant of loss depth using 2-norm regularisation inside Def.~\ref{def:lossDepth}, where the regularisation is included into the total loss function, as adding a term $\lambda ||w^*||^2$ does not change the inequalities when evaluating on $z_1,z_2$ and $z_t$.


\subsection{Statistical convergence}\label{appendsec:statrates}

Proof of Lemma~\ref{lem:reggen}:

\begin{proof}
    The minimisers are respect to $F$ instead of $l$ so we want $F$ to appear:
    \begin{align*}
        \regD(z|Q_n)-\regD(z|Q) &= \bbE_{(x,y)\sim\LaDis{Q_n}{z}} l(\hat{f}^*_{reg}(x),y) - \bbE_{(x,y)\sim\LaDis{Q}{z}} l(f^*_{reg}(x),y)\\
        &= \bbE_{(x,y)\sim\LaDis{Q_n}{z}} F(\hat{f}^*_{reg}(x),y) - \bbE_{(x,y)\sim\LaDis{Q}{z}} F(f^*_{reg}(x),y)\\ &+ reg(f^*_{reg}) - reg(\hat{f}^*_{reg}) . 
    \end{align*}
    Generalisation error of $F$ is the same as for $l$, and the rest of the proof is similar to Lemma~\ref{lemma:gen}.

\end{proof}

Proof of Theorem~\ref{prop:rates}:

\begin{proof}
Note that for $\varphi$ feature map of a RKHS, by the reproducing property $w(\cdot) = \langle w,\varphi(\cdot)\rangle$.
For ease of notation, since Assumption~\ref{assum:W} is satisfied, let us therefore identify the ball $\{w \in \cF \big| ||w|| \leq W \}$ and the space of functions $\{f:x \mapsto \langle w, \varphi(x)\rangle\}$ and denote it $F_W$ (for plain features in $\bbR^d$ we identify by isomorphism $w\in\bbR^d$ and its dual $w^*$).
    Let $C$ be such that $\forall x \in supp(Q), w\in B_W, ||l_+(\langle w, \varphi(x)\rangle,y)|||\leq C$ (it exists for the hinge loss and logistic loss thanks to Assumptions~\ref{assum:B} and \ref{assum:W} by continuity). Then we have the bound for all $w\in F_W$ (see, e.g., \citet{Liu21} for ready-to-apply version of the theorem or ~\citet{BartlettM02} for the original result) with probability greater than $1-\delta$ for any $\delta>0$:
    \begin{align*}
        |\bbE_{(x,y) \sim \LaDis{Q_n}{z}}l_+(\langle w, \varphi(x)\rangle,y)  -  \bbE_{(x,y) \sim \LaDi}l_+(\langle w, \varphi(x)\rangle,y)| 
        &\leq 2\cR_{Q,n}(\cF_W) + 2C\sqrt{\frac{log(2/\delta)}{n}}
    \end{align*}
    where we used $\cR_{Q,n}(l_+ \circ \cF_W) \leq \cR_{Q,n}(\cF_W)$ by 1-Lipschitzness of the hinge and logistic losses (see for instance \citet{Maurer16}).
    According to Theorem 3 of~\citet{kakade2008complexity}, with Assumption~\ref{assum:B} and \ref{assum:W} using the norm as regularisation, we have that:
    \begin{equation*}
        \cR_{Q,n}(\cH) = WB\sqrt{\frac{1}{\lambda n}}
    \end{equation*}
    This concludes for $LRD$ using Lemma~\ref{lemma:gen}. To conclude for $SVM\regD$ using Lemma~\ref{lem:reggen}, we need also to bound the difference of regularisation terms.
    Let us denote :
    \begin{equation}
        \hat{w}_\lambda^* = argmin_w \mathbb{E}_{(x,y) \sim\LaDis{Q_n}{z}} F_\lambda(w,x,y)
    \end{equation}
    and
    \begin{equation}
        w_\lambda^* = argmin_w \mathbb{E}_{(x,y) \sim\LaDis{Q}{z}} F_\lambda(w,x,y)
    \end{equation}
The term of difference of norms is easier reframed as a norm of difference, thanks to Assumption~\ref{assum:W}:
    \begin{align}
    |\lambda(||\hat{w}_\lambda^*||^2-||\hat{w}_\lambda^*||^2)| &= \lambda |\langle \hat{w}^*_\lambda+w^*_\lambda,\hat{w}_\lambda-w^*_\lambda\rangle|\notag \\
    &\leq \lambda(||\hat{w}^*_\lambda ||+||w^*_\lambda ||)||\hat{w}^*_\lambda-w^*_\lambda||\notag \\
    &\leq 2\lambda W||\hat{w}_\lambda-w^*_\lambda|| \label{eq:boundnormdiff}
    \end{align}
    We conclude thanks to a result by~\citet{SteinwartChristmann2008}, p.225 in the proof of their Theorem 6.24 that gives, with probability greater than $1-e^{-\tau}$:
    \begin{equation*}
        Q^n(||\hat{w}_\lambda-w^*_\lambda|| < \frac{L}{\lambda}(\sqrt{\frac{2\tau}{n}}+\sqrt{\frac{1}{n}}+\frac{4\tau}{3n}))
    \end{equation*}
    (where here the Lipschitz constant $L=1$). 
        Indeed, as in their proof, thanks to their Corollary 5.10, there exists a function $h$ such that:
    \begin{align*}
        ||\hat{w}_\lambda-w^*_\lambda|| &\leq \frac{1}{\lambda} ||\mathbb{E}_{(x,y) \sim\LaDis{Q_n}{z}} h(x,y)\varphi(x) -\mathbb{E}_{(x,y) \sim\LaDis{Q}{z}} h(x,y)\varphi(x)|| \\
        &= \frac{1}{\lambda} ||\mathbb{E}_{(x,y) \sim Q_n} h(x,y)\varphi(x) -\mathbb{E}_{(x,y) \sim Q} h(x,y)\varphi(x)||
    \end{align*}
    that verifies the concentration inequality via their Corollary 6.15:
    \begin{equation*}
        Q^n(||\mathbb{E}_{(x,y) \sim Q_n} h(x,y)\varphi(x) -\mathbb{E}_{(x,y) \sim Q} h(x,y)\varphi(x)|| < L(\sqrt{\frac{2\tau}{n}}+\sqrt{\frac{1}{n}}+\frac{4\tau}{3n})) \leq e^{-\tau}.
    \end{equation*} 
    \qed
\end{proof}


\section{Experiments}\label{appendsec:exp}
Computation were carried on a Macbook Pro 2020 with processor 2,3 GHz Intel Core i7 (4 cores) and memory 16 Go 3733 MHz LPDDR4X (graphic card is Intel Iris Plus Graphics 1536 Mo).

\subsection{Ranking correlations with densities}\label{appendsec:ranking}

In Figure~\ref{fig:rank}, we display the boxplots over 10 runs of the rank correlation coefficients (the \citet{kendall1938new} tau and the one of \citet{spearman1987proof}) between the ranks obtained by comparing densities and the ranks using data depths for a dataset with two Gaussian clusters of 100 samples each (total of $n=200$ samples) for dimension $d=2,4,6,8$, as described in section~\ref{subsec:illu}. The two kernel methods seem to come out on top, though one of their advantage is that their parameters have been automatically scaled with the data.  

    \begin{figure}[!hb]
        \centering
        \subfloat[Kendall's $\tau$ rank correlation coefficient]{%
            \includegraphics[width=.99\linewidth]{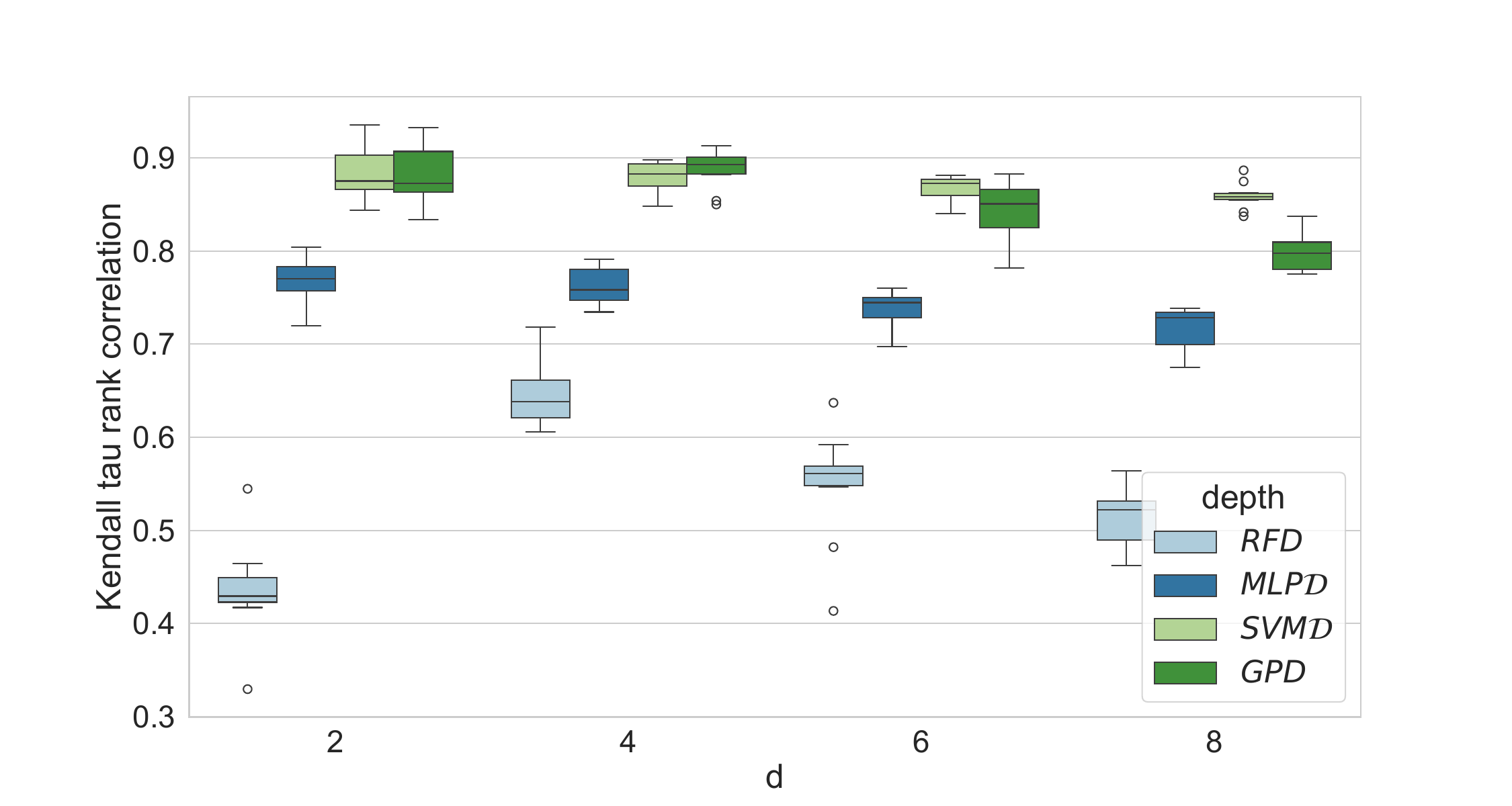}%
            \label{subfig:kendall}%
        }\\
        \subfloat[Spearman's rank correlation coefficient]{%
            \includegraphics[width=.99\linewidth]{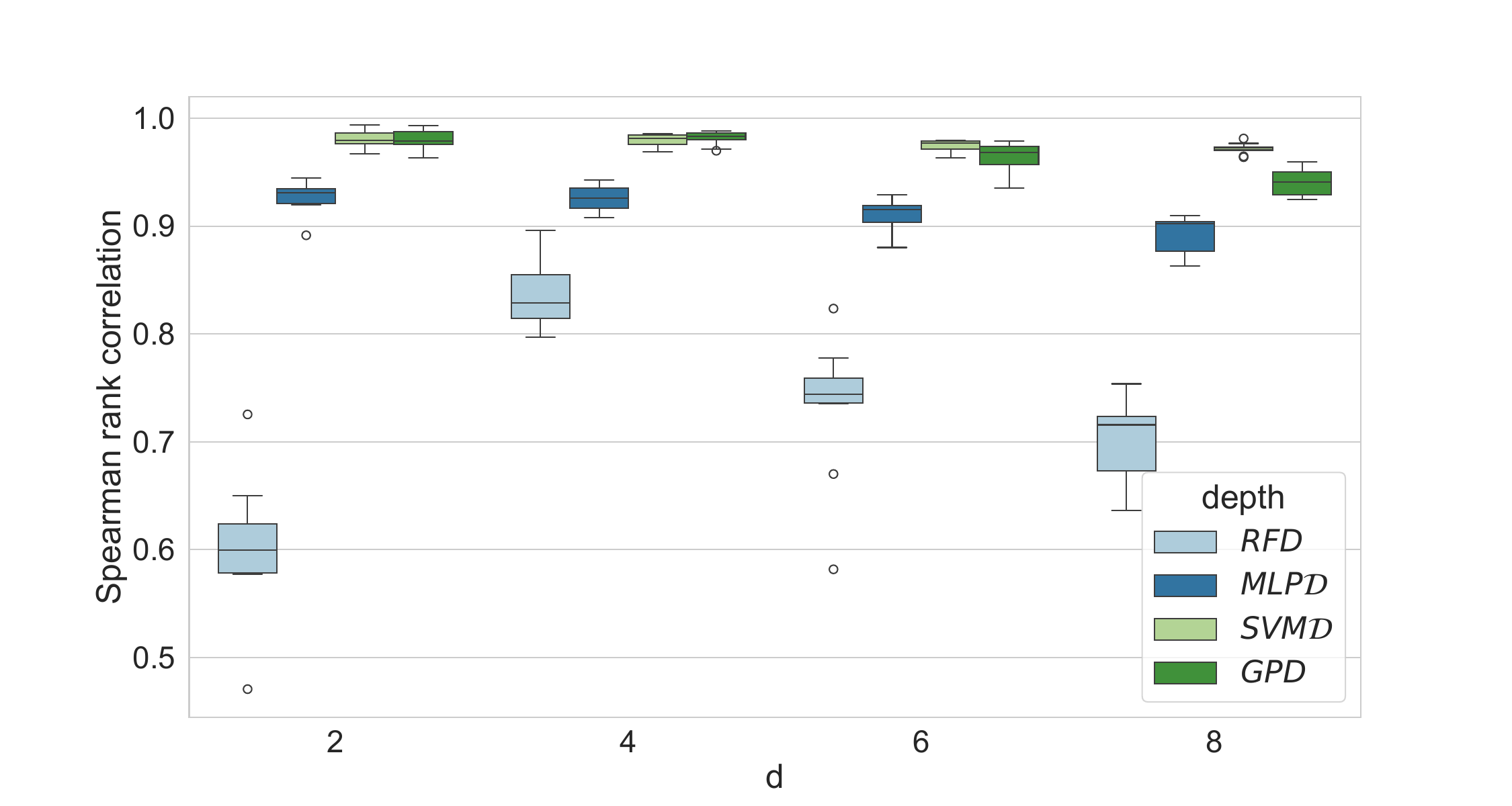}%
            \label{subfig:spearman}%
        }
        \caption{Rank correlations w.r.t. the true density for a bigaussian distribution (n = 200)}
        \label{fig:rank}
    \end{figure}


\subsection{MNIST and Fashion-MNIST}\label{appendsec:mnist}

\paragraph{Tables}
For a given dataset, the set $\cH$ of classifiers should not be too simple but neither necessarily too complex in order to avoid overfitting. Here with some anomaly detection experiments on MNIST~\citep{mnist} and Fashion-MNIST~\citep{xiao2017/online}, we show that for this datasets deep architectures are not mandatory to achieve fair results (it was also already confirmed by OC-SVM before) if you do not need to memorise the structure of your dataset (as it is done in the weights of neural networks). 
The task consists in taking one class as the inlier class and all the other classes as anomalies, then we measure the Area Under the Curve of the Receiving Operator Curve (AUC-ROC) in Tables~\ref{tab:mnist} (MNIST) and~\ref{tab:fmnist} (Fashion-MNIST), while competitors are shown in Tables~\ref{tab:mnist-compet} and~\ref{tab:fmnist-compet} (all results are round up to 2 decimals, AUC of 1.00 are actually between 0.995 and 0.999). To compare with methods that use a training set, we split the data, keeping 80\% as training set and 20\% as test set.
We apply the two depths studied in section~\ref{sec:linear}: $LR\regD$ with logistic regression and $SVM\regD$ (both with $\lambda=1$), which have both optimisation and statistical guarantees as mentioned previously.
$LR\regD$ has slightly the upper-hand compared to $SVM\regD$, it can almost perfectly classify some classes (class 1 of MNIST and Fashion-MNIST for instance) and it compares as good or better than some of the deep architectures which use sometimes several convolutional layers such as GAN~\citep{IPMI}, DSVDD~\citep{ruff18a}, DSEBM \citep{zhai16}, DAGMM \citep{Zong18}, GEOM- \citep{golan18} and variational auto-encoders~\citep{VAE}, even though logistic regression is just one neuron with sigmoid activation. This is thanks to the fact that the optimisation algorithm is run for each point instead of memorising. This has of course an impact on speed of computation, however our results shows that even taking only $n=1000$ or even $100$ samples of the training set can lead to good performance (sometimes better than running on the full training set, but it may be because we did only one run for the full training set and average over 10 runs for smaller training sets to take into account time of computation constraints).

\begin{table}[htb]
  \caption{AUC-ROC on MNIST (rounded to 2 decimals)}
  \label{tab:mnist}
  \resizebox{1.0\linewidth}{!}{
  \centering
  \begin{tabular}{lrrrrrrrrrrr}
    \toprule
method/class & 0 & 1 & 2 & 3 & 4 & 5 & 6 & 7 & 8 & 9 & \textbf{MEAN} \\
    \midrule
$SVM\mathcal{D}$ ($n=100$) & 0.96 & 0.99 & 0.87 & 0.88 & 0.92 & 0.91 & 0.97 & 0.94 & 0.85 & 0.94 & \textbf{0.92} \\
$SVM\mathcal{D}$ ($n=1000$) & 0.98 & 0.99 & 0.91 & 0.92 & 0.94 & 0.94 & 0.98 & 0.95 & 0.88 & 0.96 & \textbf{0.94} \\
$SVM\mathcal{D}$ & 0.96 & 0.99 & 0.87 & 0.87 & 0.92 & 0.91 & 0.96 & 0.94 & 0.84 & 0.94 & \textbf{0.92} \\
$LR\mathcal{D}$ ($n=100$) & 0.99 & 1.00 & 0.89 & 0.92 & 0.94 & 0.91 & 0.98 & 0.95 & 0.87 & 0.94 & \textbf{0.94} \\
$LR\mathcal{D}$ ($n=1000$) & 0.99 & 1.00 & 0.92 & 0.94 & 0.96 & 0.95 & 0.99 & 0.97 & 0.89 & 0.96 & \textbf{0.96} \\
$LR\mathcal{D}$ & 1.00 & 1.00 & 0.93 & 0.95 & 0.97 & 0.96 & 0.99 & 0.97 & 0.91 & 0.97 & \textbf{0.97} \\
    \bottomrule
  \end{tabular}}
\end{table}

\begin{table}[ht]
  \caption{AUC-ROC on Fashion-MNIST (rounded to 2 decimals)}
  \label{tab:fmnist}
  \resizebox{1.0\linewidth}{!}{
  \centering
  \begin{tabular}{lrrrrrrrrrrr}
    \toprule
method/class & 0 & 1 & 2 & 3 & 4 & 5 & 6 & 7 & 8 & 9 & \textbf{MEAN} \\
    \midrule
$SVM\mathcal{D}$ ($n=100$) & 0.88 & 0.97 & 0.86 & 0.90 & 0.87 & 0.86 & 0.81 & 0.95 & 0.81 & 0.94 & \textbf{0.89} \\
$SVM\mathcal{D}$ ($n=1000$) & 0.88 & 0.98 & 0.86 & 0.90 & 0.89 & 0.86 & 0.80 & 0.94 & 0.79 & 0.94 & \textbf{0.89} \\
$SVM\mathcal{D}$ & 0.86 & 0.98 & 0.85 & 0.88 & 0.86 & 0.86 & 0.79 & 0.92 & 0.79 & 0.93 & \textbf{0.87} \\
$LR\mathcal{D}$ ($n=100$) & 0.91 & 0.98 & 0.89 & 0.93 & 0.91 & 0.89 & 0.84 & 0.98 & 0.86 & 0.97 & \textbf{0.92} \\
$LR\mathcal{D}$ ($n=1000$) & 0.91 & 0.99 & 0.89 & 0.94 & 0.91 & 0.90 & 0.84 & 0.98 & 0.86 & 0.98 & \textbf{0.92} \\
$LR\mathcal{D}$ & 0.91 & 0.99 & 0.90 & 0.93 & 0.90 & 0.91 & 0.84 & 0.99 & 0.85 & 0.98 & \textbf{0.92} \\
    \bottomrule
  \end{tabular}}
\end{table}

\begin{table}[ht]
	\centering
	\caption{AUC-ROC for MNIST dataset with competitors taken from \cite{Wu21}}
	\label{tab:mnist-compet}
	\resizebox{1.0\linewidth}{!}{
	\begin{tabular}{llllllllllll}
\toprule
method/class & 0     & 1     & 2     & 3     & 4     & 5     & 6     & 7     & 8     & 9     & \textbf{MEAN}    \\ \midrule 
OCSVM \citep{ocsvm}  & 0.99 & 1.00 & 0.90 & 0.95 & 0.96 & 0.97 & 0.98 & 0.97 & 0.85 & 0.96 & \textbf{0.95} \\ 
KDE \citep{Bishop07}    & 0.89 & 1.00 & 0.71 & 0.69 & 0.84 & 0.78 & 0.86 & 0.88 & 0.67 & 0.83 & \textbf{0.81} \\ 
DAE \citep{DAE}    & 0.89 & 1.00 & 0.79 & 0.85 & 0.89 & 0.82 & 0.94 & 0.92 & 0.74 & 0.92 & \textbf{0.88} \\ 
VAE \citep{VAE}    & 1.00 & 1.00 & 0.94 & 0.96 & 0.97 & 0.96 & 0.99 & 0.98 & 0.92 & 0.98 & \textbf{0.97} \\ 
Pix CNN \citep{PixCNN} & 0.53 & 1.00 & 0.48 & 0.52 & 0.74 & 0.54 & 0.59 & 0.79 & 0.34 & 0.66 & \textbf{0.62} \\ 
GAN \citep{IPMI}     & 0.93 & 1.00 & 0.81 & 0.82 & 0.82 & 0.80 & 0.89 & 0.90 & 0.82 & 0.89 & \textbf{0.87} \\ 
AND \citep{AND}     & 0.98 & 1.00 & 0.95 & 0.95 & 0.96 & 0.97 & 0.99 & 0.97 & 0.92 & 0.98 & \textbf{0.97} \\ 
AnoGAN \citep{IPMI} & 0.97 & 0.99 & 0.85 & 0.89 & 0.89 & 0.88 & 0.95 & 0.94 & 0.85 & 0.92 & \textbf{0.91} \\ 
DSVDD  \citep{ruff18a}  & 0.98 & 1.00 & 0.92 & 0.92 & 0.95 & 0.89 & 0.98 & 0.95 & 0.94 & 0.97 & \textbf{0.95} \\ 
OCGAN \citep{ocgan}   & 1.00 & 1.00 & 0.94 & 0.96 & 0.98 & 0.98 & 0.99 & 0.98 & 0.94 & 0.98 & \textbf{0.98} \\
OCADNN \citep{Wu21} & 1.00 & 1.00 & 0.96 & 0.97 & 0.97 & 0.98 & 0.99 & 0.98 & 0.95 & 0.98 & \textbf{0.98} \\ 
\bottomrule
				\end{tabular}}
\end{table}

\begin{table}[!ht]
	\centering
	\caption{AUC-ROC for Fashion-MNIST dataset with competitors taken from \cite{Wu21}}
	\label{tab:fmnist-compet}
	\begin{tabular}{lr}
\toprule
method/class &  \textbf{MEAN AUC}    \\ \midrule 
DSVDD \citep{ruff18a} & \textbf{0.93} \\
DAGMM \citep{Zong18} & \textbf{0.52} \\
DSEBM \citep{zhai16} & \textbf{0.87} \\
GEOM- \citep{golan18} & \textbf{0.80} \\
GEOM \citep{golan18} & \textbf{0.94} \\
GOAD \citep{BergmanH20} & \textbf{0.94} \\
OCADNN \cite{Wu21} & \textbf{0.95} \\
\bottomrule
\end{tabular}
\end{table}

    
\end{document}